\documentclass{book}
\usepackage{amssymb,amsmath,latexsym,epic,eepic,itemlist,kcp-new}
\usepackage[bindingoffset=0cm,a4paper,centering,totalheight=200mm,textwidth=120mm]{geometry}

\newcommand{\CA}{{\cal A}}
\newcommand{\comma}{,\ldots ,}
\newcommand{\Bf}{{\bf f}}
\newcommand{\half}{\mbox{$\frac{1}{2}$}}
\newcommand{\CM}{{\mathcal{M}}}
\newcommand{\Bh}{{\bf h}}
\newcommand{\BE}{{\bf E}}
\newcommand{\Bz}{{\bf z}}
\newcommand{\BBE}{\mbox{\(\mathbb E\)}}
\newcommand{\BBP}{\mbox{\(\mathbb P\)}}
\newcommand{\Be}{{\bf e}}
\newcommand{\CB}{\mathcal{B}}

\collection
\begin{document}
\title{The Equational Approach to CF2 Semantics}
\author{Dov M. Gabbay\\
Bar Ilan University, Israel;\\
King's College London, UK;\\
University of Luxembourg, Luxembourg\\
{\small Paper 459h:  DovPapers/459o/459-EACF2s.tex}\\
February 2012; Version dated 15 February 2012
}
\maketitle

\begin{abstract}
We introduce a family of new equational semantics for argumentation networks which can handle odd and even loops in a uniform manner. We offer one version of   equational semantics which is equivalent to CF2 semantics, and a better version which gives the same results as traditional Dung semantics for even loops but can still handle odd loops.
\end{abstract}

\section{Background and orientation}
Our starting point is the important papers of Baroni, Giacomin and Guida on odd loops and SCC recursiveness \cite{459-1,459-6}.  In their papers the authors offer the CF2 semantics in response to difficulties arising from the Dung semantics handling of odd and even loops. In our paper we outline our equational approach to argumentation networks and show how the CF2 semantics can be obtained from perturbations to the equations associated with the networks. This approach will offer additional methodological support for the CF2 semantics, while at the same time show the power of the equational approach. We offer our own loop-busting equational semantics LB, which includes CF2 as a special case.

The structure of this paper is as follows.  Section 2 reproduces the motivating discussion from \cite{459-1} for the CF2 semantics and points out its weaknesses.  Section 3 introduces the equational semantics.  Section 4 defines our loop busting semantics LB. Section 5 introduces our semantics LB2 and compares with CF2 on the technical level.  We conclude with a general discussion in Section 6. 

\section{CF2 semantics as introduced in the SCC paper \cite{459-1}}

Baroni {\em et al.} devote a long discussion about the inadequacy of the traditional semantics in handling odd and even loops. They say, and I quote:
\begin{quote}
``the length of the leftmost cycle should not affect the justification  states [of an argument]. More generally, it is counter-intuitive that different results in conceptually similar situations depend on the length of the cycle. Symmetry reasons suggest that all cycles should be treated equally and should yield the same results.''
\end{quote}

We now reproduce Figure 8 of \cite{459-1}, and discuss the problems associated with it.

\begin{figure}[ht]
\centering
\setlength{\unitlength}{0.00083333in}
\begingroup\makeatletter\ifx\SetFigFont\undefined%
\gdef\SetFigFont#1#2#3#4#5{%
  \reset@font\fontsize{#1}{#2pt}%
  \fontfamily{#3}\fontseries{#4}\fontshape{#5}%
  \selectfont}%
\fi\endgroup%
{\renewcommand{\dashlinestretch}{30}
\begin{picture}(3593,2452)(0,-10)
\put(3503,1894){\makebox(0,0)[lb]{\smash{{\SetFigFont{10}{12.0}{\rmdefault}{\mddefault}{\updefault}$\phi$}}}}
\put(2737.000,-1541.000){\arc{4652.419}{4.3498}{5.0749}}
\blacken\path(3438.828,645.337)(3562.000,634.000)(3458.654,701.967)(3438.828,645.337)
\put(2565.808,359.154){\arc{3771.513}{4.3029}{5.2125}}
\blacken\path(3349.352,2041.278)(3470.000,2014.000)(3376.383,2094.844)(3349.352,2041.278)
\put(2637.419,3476.175){\arc{3628.683}{1.1126}{2.0743}}
\blacken\path(1882.601,1859.514)(1762.000,1887.000)(1855.478,1805.995)(1882.601,1859.514)
\path(765,1714)(765,2314)
\blacken\path(795.000,2194.000)(765.000,2314.000)(735.000,2194.000)(795.000,2194.000)
\path(765,2314)(1665,2014)
\blacken\path(1541.671,2023.487)(1665.000,2014.000)(1560.645,2080.408)(1541.671,2023.487)
\path(1665,1939)(765,1714)
\blacken\path(874.141,1772.209)(765.000,1714.000)(888.693,1714.000)(874.141,1772.209)
\path(1215,964)(1740,589)
\blacken\path(1624.915,634.337)(1740.000,589.000)(1659.789,683.161)(1624.915,634.337)
\path(1740,439)(1215,214)
\blacken\path(1313.480,288.845)(1215.000,214.000)(1337.115,233.696)(1313.480,288.845)
\path(1215,214)(540,514)
\blacken\path(661.842,492.678)(540.000,514.000)(637.473,437.849)(661.842,492.678)
\path(540,664)(1140,964)
\blacken\path(1046.085,883.502)(1140.000,964.000)(1019.252,937.167)(1046.085,883.502)
\put(15,2314){\makebox(0,0)[lb]{\smash{{\SetFigFont{10}{12.0}{\rmdefault}{\mddefault}{\updefault}(a)}}}}
\put(765,2314){\makebox(0,0)[rb]{\smash{{\SetFigFont{10}{12.0}{\rmdefault}{\mddefault}{\updefault}$\beta$}}}}
\put(765,1564){\makebox(0,0)[rb]{\smash{{\SetFigFont{10}{12.0}{\rmdefault}{\mddefault}{\updefault}$\alpha$}}}}
\put(15,1039){\makebox(0,0)[lb]{\smash{{\SetFigFont{10}{12.0}{\rmdefault}{\mddefault}{\updefault}(b)}}}}
\put(1215,1039){\makebox(0,0)[b]{\smash{{\SetFigFont{10}{12.0}{\rmdefault}{\mddefault}{\updefault}$\beta$}}}}
\put(1215,64){\makebox(0,0)[b]{\smash{{\SetFigFont{10}{12.0}{\rmdefault}{\mddefault}{\updefault}$\alpha$}}}}
\put(1672,1924){\makebox(0,0)[lb]{\smash{{\SetFigFont{10}{12.0}{\rmdefault}{\mddefault}{\updefault}$\gamma$}}}}
\put(520,522){\makebox(0,0)[rb]{\smash{{\SetFigFont{10}{12.0}{\rmdefault}{\mddefault}{\updefault}$\delta$}}}}
\put(1777,484){\makebox(0,0)[lb]{\smash{{\SetFigFont{10}{12.0}{\rmdefault}{\mddefault}{\updefault}$\gamma$}}}}
\put(3578,499){\makebox(0,0)[lb]{\smash{{\SetFigFont{10}{12.0}{\rmdefault}{\mddefault}{\updefault}$\phi$}}}}
\put(2693.188,1703.254){\arc{3001.697}{0.9834}{2.1300}}
\blacken\path(2016.099,397.602)(1897.000,431.000)(1986.371,345.485)(2016.099,397.602)
\end{picture}
}
\caption{Figure 8 of \cite{459-1}.  Problematic argumentation of frameworks}
\label{FF1}
\end{figure}

The only preferred extension for Figure \ref{FF1}(a) is $\{\phi,\alpha\}$, while for Figure \ref{FF1}(b) we have the extensions $\{\alpha,\beta,\phi\}$ and $\{\delta,\gamma\}$.  These two results are conceptually different, in (a) $\phi$ is not prevented from being justified while in (b) it is prevented.

A more striking problem is the one outlined in Figure 9 of Baroni {\em et al.} \cite{459-1}, here reproduced as Figure \ref{FF2}.

\begin{figure}[ht]
\centering
\setlength{\unitlength}{0.00083333in}
\begingroup\makeatletter\ifx\SetFigFont\undefined%
\gdef\SetFigFont#1#2#3#4#5{%
  \reset@font\fontsize{#1}{#2pt}%
  \fontfamily{#3}\fontseries{#4}\fontshape{#5}%
  \selectfont}%
\fi\endgroup%
{\renewcommand{\dashlinestretch}{30}
\begin{picture}(2982,952)(0,-10)
\put(2967,439){\makebox(0,0)[b]{\smash{{\SetFigFont{10}{12.0}{\rmdefault}{\mddefault}{\updefault}$\delta$}}}}
\put(427.714,-3578.857){\arc{8957.297}{4.6430}{5.0871}}
\blacken\path(1944.145,603.379)(2067.000,589.000)(1965.366,659.502)(1944.145,603.379)
\path(42,214)(42,814)
\blacken\path(72.000,694.000)(42.000,814.000)(12.000,694.000)(72.000,694.000)
\path(42,814)(942,514)
\blacken\path(818.671,523.487)(942.000,514.000)(837.645,580.408)(818.671,523.487)
\path(942,439)(42,214)
\blacken\path(151.141,272.209)(42.000,214.000)(165.693,214.000)(151.141,272.209)
\path(1242,514)(1842,514)
\blacken\path(1722.000,484.000)(1842.000,514.000)(1722.000,544.000)(1722.000,484.000)
\path(2292,514)(2892,514)
\blacken\path(2772.000,484.000)(2892.000,514.000)(2772.000,544.000)(2772.000,484.000)
\put(42,814){\makebox(0,0)[rb]{\smash{{\SetFigFont{10}{12.0}{\rmdefault}{\mddefault}{\updefault}$\beta$}}}}
\put(42,64){\makebox(0,0)[rb]{\smash{{\SetFigFont{10}{12.0}{\rmdefault}{\mddefault}{\updefault}$\alpha$}}}}
\put(1017,439){\makebox(0,0)[b]{\smash{{\SetFigFont{10}{12.0}{\rmdefault}{\mddefault}{\updefault}$\phi$}}}}
\put(2067,439){\makebox(0,0)[b]{\smash{{\SetFigFont{10}{12.0}{\rmdefault}{\mddefault}{\updefault}$\gamma$}}}}
\put(688.323,3750.029){\arc{7311.892}{1.1841}{1.7277}}
\blacken\path(1965.914,292.714)(2067.000,364.000)(1944.260,348.671)(1965.914,292.714)
\end{picture}
}
\caption{Figure 9 of \cite{459-1}: Floating defeat and floating acceptance}\label{FF2}
\end{figure}

The only extension in traditional Dung semantics is all undecided.  Common sense, however, expects $\gamma$ to be out and $\delta$ to be in.  In \cite{459-5}, this is characterised as ``one of the main unsolved problems in argumentation-based semantics''.  

The CF2 semantics of \cite{459-1} treats the loops of Figures \ref{FF1}(a) and \ref{FF1}(b) and \ref{FF2} all in the same way, by taking as CF2 extensions maximal conflict-free sets.  We therefore get for Figure \ref{FF1}(a) the CF2 extensions
\[
\{\alpha,\phi\},\{\beta,\phi\} \mbox{ and } \{\gamma\}
\]
and for Figure \ref{FF1}(b) we get
\[
\{\alpha,\beta,\phi\}, \{\delta,\gamma\}\mbox{ and }\{\delta,\phi\}
\]
and for Figure \ref{FF2} we get the extensions 
\[
\{\beta,\delta\},\{\alpha,\delta\}\mbox{ and } \{\phi,\delta\}.
\]

Let us put forward a figure of our own, Figure \ref{FF3}.  This is a 9 point cycle.  The CF2 semantics will take all maximal conflict free subsets as extensions, including among them $\{a_3, a_6, a_9\}$ and its cyclic translations as well as $\{a_1,a_3,a_5,a_7\}$ and its cyclic translations (e.g.\ $\{a_2,a_4,a_6,a_8\}$, etc.).

\begin{figure}[ht]
\centering
\setlength{\unitlength}{0.00083333in}
\begingroup\makeatletter\ifx\SetFigFont\undefined%
\gdef\SetFigFont#1#2#3#4#5{%
  \reset@font\fontsize{#1}{#2pt}%
  \fontfamily{#3}\fontseries{#4}\fontshape{#5}%
  \selectfont}%
\fi\endgroup%
{\renewcommand{\dashlinestretch}{30}
\begin{picture}(3255,2476)(0,-10)
\put(540,2089){\makebox(0,0)[b]{\smash{{\SetFigFont{10}{12.0}{\rmdefault}{\mddefault}{\updefault}$a_9$}}}}
\put(2250.934,1166.588){\arc{2062.771}{5.2375}{5.9957}}
\blacken\path(3171.371,1561.908)(3240.000,1459.000)(3227.873,1582.097)(3171.371,1561.908)
\put(2167.618,1338.360){\arc{2149.915}{0.0692}{0.9187}}
\blacken\path(2891.769,584.743)(2820.000,484.000)(2930.734,539.118)(2891.769,584.743)
\put(1652.081,2299.432){\arc{4330.980}{1.1624}{1.5024}}
\blacken\path(1916.536,180.466)(1800.000,139.000)(1922.339,120.748)(1916.536,180.466)
\put(2021.343,3682.266){\arc{7194.292}{1.7565}{1.9882}}
\blacken\path(685.179,374.708)(563.000,394.000)(661.727,319.481)(685.179,374.708)
\put(1340.362,1500.135){\arc{2739.211}{2.3770}{2.8672}}
\blacken\path(87.907,1024.328)(22.000,1129.000)(30.896,1005.627)(87.907,1024.328)
\put(916.548,1258.735){\arc{1781.862}{3.2408}{4.1704}}
\blacken\path(375.856,1928.642)(457.000,2022.000)(341.470,1977.810)(375.856,1928.642)
\put(1730.297,826.952){\arc{3143.841}{4.7612}{5.2351}}
\blacken\path(2394.819,2218.266)(2515.000,2189.000)(2422.730,2271.378)(2394.819,2218.266)
\put(1590,2314){\makebox(0,0)[b]{\smash{{\SetFigFont{10}{12.0}{\rmdefault}{\mddefault}{\updefault}$a_1$}}}}
\put(2640,2089){\makebox(0,0)[b]{\smash{{\SetFigFont{10}{12.0}{\rmdefault}{\mddefault}{\updefault}$a_2$}}}}
\put(3240,1339){\makebox(0,0)[b]{\smash{{\SetFigFont{10}{12.0}{\rmdefault}{\mddefault}{\updefault}$a_3$}}}}
\put(2640,364){\makebox(0,0)[b]{\smash{{\SetFigFont{10}{12.0}{\rmdefault}{\mddefault}{\updefault}$a_4$}}}}
\put(1590,64){\makebox(0,0)[b]{\smash{{\SetFigFont{10}{12.0}{\rmdefault}{\mddefault}{\updefault}$a_5$}}}}
\put(465,439){\makebox(0,0)[b]{\smash{{\SetFigFont{10}{12.0}{\rmdefault}{\mddefault}{\updefault}$a_6$}}}}
\put(15,1189){\makebox(0,0)[b]{\smash{{\SetFigFont{10}{12.0}{\rmdefault}{\mddefault}{\updefault}$a_7$}}}}
\put(1512.612,580.767){\arc{3609.252}{4.2391}{4.6511}}
\blacken\path(1285.407,2340.695)(1402.000,2382.000)(1279.686,2400.422)(1285.407,2340.695)
\end{picture}
}
\caption{}\label{FF3}
\end{figure}

We shall see later that some of our loop busting semantics LB yield only $\{a_1,a_3,a_5,a_7\}$ and its cyclic translations and not $\{a_3,a_6,a_9\}$, but other LB semantics does yield it.

We agree with \cite{459-1} on the need for a new approach but we feel that the CF2 semantics offered as a solution requires further independent methodological justification. The notion of conflict freeness is a neutral notion and does not use the central notion of ``attack'' of the Dung semantics.  When we get a loop like $\{\alpha,\beta,\gamma\}$, in a real life application as in Figure \ref{FF1}(a), there are good reasons for the loop in the context of the application area where it arises, and we want a decisive solution to the loop in terms of \{in, out\}, which makes sense in the application area. We do not want just a technical, non-decisive choice of maximal conflict free sets, a sort of compromise which involves no real decision making.  Imagine we have a loop with $\{\alpha, \beta, \gamma\}$, and we go to a judge and we expect some effective decision making.  We hope for something like ``I think $\gamma$ is not serious''.  

Taking the maximal conflict free sets in this case, namely $\{\alpha\},\{\beta\}$ and $\{\gamma\}$ means nothing.  We would perceive that the judge is not doing his job properly and that he is just offering us options which are obvious and non-controversial, given the geometry of the loop!  See \cite{459-3} for extensive examples of resolving loops in a practical realistic way.  

Another problem, in our opinion, with the CF2 semantics is that it is an overkill as far as loop-breaking is concerned.  If we look at Figures \ref{FF1}(a) and \ref{FF1}(b) and replace them by Figure \ref{FF4a} and \ref{FF4b} our loop-breaking needs are the same according to \cite{459-1}, but in Figure \ref{FF4a}, we do not need the extensions $\{a_3,a_6,a_9,b\}$ from the loop-breaking point of view. In our LB semantics we do not mind if there will be less extensions than CF2, in the odd cycle case of Figure \ref{FF4a} but insist that there will be the same extensions as the traditional Dung semantics in the even cycle of Figure \ref{FF4b}. CF2 gives more extensions then the traditional Dung extensions for Figure \ref{FF4b}.

\begin{figure}[ht]
\centering
\setlength{\unitlength}{0.00083333in}
\begingroup\makeatletter\ifx\SetFigFont\undefined%
\gdef\SetFigFont#1#2#3#4#5{%
  \reset@font\fontsize{#1}{#2pt}%
  \fontfamily{#3}\fontseries{#4}\fontshape{#5}%
  \selectfont}%
\fi\endgroup%
{\renewcommand{\dashlinestretch}{30}
\begin{picture}(3255,3307)(0,-10)
\put(1597,55){\makebox(0,0)[b]{\smash{{\SetFigFont{10}{12.0}{\rmdefault}{\mddefault}{\updefault}$b$}}}}
\put(1594.344,1053.135){\arc{4370.474}{4.8099}{5.1958}}
\blacken\path(2488.996,3013.654)(2610.000,2988.000)(2515.305,3067.579)(2488.996,3013.654)
\put(2250.934,1997.588){\arc{2062.771}{5.2375}{5.9957}}
\blacken\path(3171.371,2392.908)(3240.000,2290.000)(3227.873,2413.097)(3171.371,2392.908)
\put(2167.618,2169.360){\arc{2149.915}{0.0692}{0.9187}}
\blacken\path(2891.769,1415.743)(2820.000,1315.000)(2930.734,1370.118)(2891.769,1415.743)
\put(1652.081,3130.432){\arc{4330.980}{1.1624}{1.5024}}
\blacken\path(1916.536,1011.466)(1800.000,970.000)(1922.339,951.748)(1916.536,1011.466)
\put(2021.343,4513.266){\arc{7194.292}{1.7565}{1.9882}}
\blacken\path(685.179,1205.708)(563.000,1225.000)(661.727,1150.481)(685.179,1205.708)
\put(1340.362,2331.135){\arc{2739.211}{2.3770}{2.8672}}
\blacken\path(87.907,1855.328)(22.000,1960.000)(30.896,1836.627)(87.907,1855.328)
\put(916.548,2089.735){\arc{1781.862}{3.2408}{4.1704}}
\blacken\path(375.856,2759.642)(457.000,2853.000)(341.470,2808.810)(375.856,2759.642)
\path(1590,790)(1590,280)
\blacken\path(1560.000,400.000)(1590.000,280.000)(1620.000,400.000)(1560.000,400.000)
\put(1590,3145){\makebox(0,0)[b]{\smash{{\SetFigFont{10}{12.0}{\rmdefault}{\mddefault}{\updefault}$a_1$}}}}
\put(2640,2920){\makebox(0,0)[b]{\smash{{\SetFigFont{10}{12.0}{\rmdefault}{\mddefault}{\updefault}$a_2$}}}}
\put(3240,2170){\makebox(0,0)[b]{\smash{{\SetFigFont{10}{12.0}{\rmdefault}{\mddefault}{\updefault}$a_3$}}}}
\put(2640,1195){\makebox(0,0)[b]{\smash{{\SetFigFont{10}{12.0}{\rmdefault}{\mddefault}{\updefault}$a_4$}}}}
\put(1590,895){\makebox(0,0)[b]{\smash{{\SetFigFont{10}{12.0}{\rmdefault}{\mddefault}{\updefault}$a_5$}}}}
\put(465,1270){\makebox(0,0)[b]{\smash{{\SetFigFont{10}{12.0}{\rmdefault}{\mddefault}{\updefault}$a_6$}}}}
\put(15,2020){\makebox(0,0)[b]{\smash{{\SetFigFont{10}{12.0}{\rmdefault}{\mddefault}{\updefault}$a_7$}}}}
\put(540,2920){\makebox(0,0)[b]{\smash{{\SetFigFont{10}{12.0}{\rmdefault}{\mddefault}{\updefault}$a_9$}}}}
\put(1512.612,1411.767){\arc{3609.252}{4.2391}{4.6511}}
\blacken\path(1285.407,3171.695)(1402.000,3213.000)(1279.686,3231.422)(1285.407,3171.695)
\end{picture}
}

\caption{}\label{FF4a}
\end{figure}

\begin{figure}[ht]
\centering
\setlength{\unitlength}{0.00083333in}
\begingroup\makeatletter\ifx\SetFigFont\undefined%
\gdef\SetFigFont#1#2#3#4#5{%
  \reset@font\fontsize{#1}{#2pt}%
  \fontfamily{#3}\fontseries{#4}\fontshape{#5}%
  \selectfont}%
\fi\endgroup%
{\renewcommand{\dashlinestretch}{30}
\begin{picture}(3271,3307)(0,-10)
\put(794,995){\makebox(0,0)[b]{\smash{{\SetFigFont{10}{12.0}{\rmdefault}{\mddefault}{\updefault}$a_6$}}}}
\put(1766.112,2429.436){\arc{3162.460}{0.4809}{0.8514}}
\blacken\path(2874.759,1344.131)(2808.000,1240.000)(2915.908,1300.464)(2874.759,1344.131)
\put(1800.142,1281.010){\arc{3894.796}{4.7329}{5.0553}}
\blacken\path(2331.584,3123.276)(2455.000,3115.000)(2349.998,3180.381)(2331.584,3123.276)
\put(2419.453,2241.317){\arc{1605.550}{5.0337}{5.8018}}
\blacken\path(3041.786,2698.679)(3131.000,2613.000)(3092.601,2730.582)(3041.786,2698.679)
\put(1735.990,2013.000){\arc{2993.656}{6.0043}{6.3634}}
\blacken\path(3202.943,2014.129)(3228.000,1893.000)(3262.893,2011.670)(3202.943,2014.129)
\put(1603.652,4963.861){\arc{8059.271}{1.6243}{1.7217}}
\blacken\path(1120.954,993.503)(998.000,980.000)(1112.843,934.054)(1120.954,993.503)
\put(1755.500,3382.500){\arc{4978.077}{2.0007}{2.1791}}
\blacken\path(449.698,1298.994)(333.000,1340.000)(416.672,1248.901)(449.698,1298.994)
\put(1393.833,1923.480){\arc{2798.009}{3.2789}{3.5980}}
\blacken\path(117.288,2418.053)(138.000,2540.000)(62.338,2442.147)(117.288,2418.053)
\put(1388.638,1830.993){\arc{2765.569}{4.3843}{4.7459}}
\blacken\path(1315.325,3181.731)(1435.000,3213.000)(1314.689,3241.727)(1315.325,3181.731)
\put(1077.792,2126.031){\arc{2153.799}{2.5269}{2.9915}}
\blacken\path(66.955,1853.695)(13.000,1965.000)(8.229,1841.399)(66.955,1853.695)
\put(2128.284,419.356){\arc{5962.942}{4.0257}{4.1812}}
\blacken\path(531.486,2901.595)(618.000,2990.000)(500.062,2952.708)(531.486,2901.595)
\path(1623,790)(1623,280)
\blacken\path(1593.000,400.000)(1623.000,280.000)(1653.000,400.000)(1593.000,400.000)
\put(1623,3145){\makebox(0,0)[b]{\smash{{\SetFigFont{10}{12.0}{\rmdefault}{\mddefault}{\updefault}$a_1$}}}}
\put(1623,895){\makebox(0,0)[b]{\smash{{\SetFigFont{10}{12.0}{\rmdefault}{\mddefault}{\updefault}$a_6$}}}}
\put(1630,55){\makebox(0,0)[b]{\smash{{\SetFigFont{10}{12.0}{\rmdefault}{\mddefault}{\updefault}$b$}}}}
\put(2477,3010){\makebox(0,0)[b]{\smash{{\SetFigFont{10}{12.0}{\rmdefault}{\mddefault}{\updefault}$a_2$}}}}
\put(3168,2500){\makebox(0,0)[b]{\smash{{\SetFigFont{10}{12.0}{\rmdefault}{\mddefault}{\updefault}$a_3$}}}}
\put(2636,1112){\makebox(0,0)[b]{\smash{{\SetFigFont{10}{12.0}{\rmdefault}{\mddefault}{\updefault}$a_5$}}}}
\put(3220,1765){\makebox(0,0)[b]{\smash{{\SetFigFont{10}{12.0}{\rmdefault}{\mddefault}{\updefault}$a_4$}}}}
\put(283,1385){\makebox(0,0)[b]{\smash{{\SetFigFont{10}{12.0}{\rmdefault}{\mddefault}{\updefault}$a_7$}}}}
\put(33,2018){\makebox(0,0)[b]{\smash{{\SetFigFont{10}{12.0}{\rmdefault}{\mddefault}{\updefault}$a_8$}}}}
\put(203,2590){\makebox(0,0)[b]{\smash{{\SetFigFont{10}{12.0}{\rmdefault}{\mddefault}{\updefault}$a_9$}}}}
\put(723,3010){\makebox(0,0)[b]{\smash{{\SetFigFont{10}{12.0}{\rmdefault}{\mddefault}{\updefault}$a_{10}$}}}}
\put(1518.475,4296.672){\arc{6683.015}{1.2774}{1.4765}}
\blacken\path(1948.863,1013.309)(1833.000,970.000)(1955.613,953.690)(1948.863,1013.309)
\end{picture}
}
\caption{}\label{FF4b}
\end{figure}

We should realise that within the context of argumentation theory alone, the maximal conflict free CF2 solution seems somewhat arbitrary, a device which is just technically successful. 

It is also the only device available for the loop breaking in this context.

The next sections will discuss the equational approach of \cite{459-4}, and introduce the new LB semantics.

\section{The equational approach}

Let $\CA =(S,R)$ be an argumentation frame $S\neq \varnothing$ is the set of arguments and $R\subseteq S\times S$ is the attack relation.  The equational approach views $(S,R)$ as a bearer of equations with the elements of $S$ as the variables ranging over $[0,1]$ and with $R$ as the generator of equations. Let $x \in S$ and let $y_1\comma y_k$ be all of its attackers. We write two types of equations $Eq_{\max}(\CA)$ and $Eq_{\rm inverse}(\CA)$.\footnote{In \cite{459-4} there are more $Eq$ options.}

For $Eq_{\max}$ we write
\begin{itemize}
\item $x=1-\max(y_1\comma y_k)$
\item $x=1$ if it has no attackers.
\end{itemize}

For $Eq_{\rm inverse}$ we write 
\begin{itemize}
\item $x =\prod^k_{i=1} (1-y_i)$
\item $x=1$, if it has no attackers.
\end{itemize}

We seek solutions \Bf\ for the above equations. In \cite{459-4} we prove the following:

\begin{theorem}\label{FT1}
\begin{enumerate}
\item There is always at least one solution in $[0,1]$  to any system of continuous equations $Eq(\CA)$.
\item If we use $Eq_{\max} (\CA)$ then the solutions \Bf\ correspond exactly to the Dung extensions of \CA.  Namely
\begin{itemize}
\item $\Bf(x) =1$ corresponds to $x=$ in
\item $\Bf(x)=0$ corresponds to $x=$ out
\item $0 < \Bf(x) < 1$ corresponds to $x=$ undecided.

The actual value in $[0,1]$ reflects the degree of odd looping involving $x$.
\end{itemize}
\item  If we use $Eq_{\rm inverse}$, we give more sensitivity to loops. For example the more undecided elements $y$ attack  $x$, the closer to 0 (out)  its value gets.
\end{enumerate}
\end{theorem}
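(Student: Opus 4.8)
The plan is to prove the three parts with three different tools: a fixed-point theorem for existence, a labelling case-analysis for the $Eq_{\max}$ correspondence, and an elementary inequality for the $Eq_{\rm inverse}$ sensitivity claim. \textbf{Part 1.} First I would recast a solution of $Eq(\CA)$ as a fixed point. Define $F:[0,1]^S\to[0,1]^S$ by letting the $x$-component of $F(\Bf)$ be the right-hand side of the equation for $x$ (so $1-\max(\Bf(y_1)\comma \Bf(y_k))$ in the $Eq_{\max}$ case, $\prod_{i=1}^{k}(1-\Bf(y_i))$ in the $Eq_{\rm inverse}$ case, and $1$ when $x$ has no attackers). Then $\Bf$ solves $Eq(\CA)$ iff $F(\Bf)=\Bf$. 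The domain $[0,1]^S$ is compact and convex, $F$ maps it into itself (each right-hand side lies in $[0,1]$, since $1-\max(\cdots)\in[0,1]$ and a product of factors $1-y_i\in[0,1]$ stays in $[0,1]$), and $F$ is continuous because $\max$, multiplication and $t\mapsto 1-t$ are continuous. Brouwer's fixed-point theorem then supplies the required solution. This uses only continuity and the self-mapping property, so it applies verbatim to any continuous $Eq(\CA)$.

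\textbf{Part 2 (forward direction).} I would set up the map sending a solution $\Bf$ of $Eq_{\max}(\CA)$ to the labelling $L_{\Bf}$ with $L_{\Bf}(x)=\mathrm{in}$ when $\Bf(x)=1$, $L_{\Bf}(x)=\mathrm{out}$ when $\Bf(x)=0$, and $L_{\Bf}(x)=\mathrm{undec}$ when $0<\Bf(x)<1$. The heart of the matter is to read the three complete-labelling clauses directly off $\Bf(x)=1-\max_i\Bf(y_i)$. We have $\Bf(x)=1$ iff $\max_i\Bf(y_i)=0$, i.e.\ every attacker has value $0$ (all out); $\Bf(x)=0$ iff $\max_i\Bf(y_i)=1$, i.e.\ some attacker has value $1$ (an in-attacker); and $0<\Bf(x)<1$ iff $0<\max_i\Bf(y_i)<1$, i.e.\ no attacker has value $1$ (no in-attacker) yet some attacker has value strictly inside $(0,1)$ (an undec attacker). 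These are exactly the in/out/undec clauses of a complete Dung labelling, so every solution induces such a labelling.

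\textbf{Part 2 (converse, the crux).} The main obstacle is surjectivity: given a complete labelling $L$, I must exhibit a solution realising it. The clean construction is to set $\Bf(x)=1$ on in-nodes, $\Bf(x)=0$ on out-nodes, and $\Bf(x)=\half$ on undec-nodes, and then check all equations. The only non-trivial case is an undec-node $x$: its attackers are out-nodes (value $0$) and undec-nodes (value $\half$), with no in-attacker and at least one undec attacker, so $\max_i\Bf(y_i)=\half$ and $1-\half=\half=\Bf(x)$. Hence every complete labelling is realised; combining the two directions yields the exact correspondence. Note the correspondence is at the level of the in/out/undec \emph{pattern}: the actual value of $\Bf$ inside $(0,1)$ is then free to record finer odd-loop information, which is what the final clause of part 2 asserts.

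\textbf{Part 3.} This is a qualitative comparison, which I would make precise by the pointwise inequality
\[
\prod_{i=1}^{k}(1-y_i)\ \le\ \min_i(1-y_i)=1-\max_i y_i,\qquad y_i\in[0,1],
\]
with strict inequality as soon as at least two of the $y_i$ are positive. Thus, on identical inputs, the $Eq_{\rm inverse}$ right-hand side never exceeds the $Eq_{\max}$ one, so inverse solutions are driven toward $0$ (out). The ``more undecided attackers'' statement is the special case where $x$ is attacked by $k$ nodes all of value $v\in(0,1)$: the inverse equation forces $\Bf(x)=(1-v)^k\to 0$ as $k$ grows, whereas the max equation returns the constant $1-v$. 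I expect Part 2 to carry the real content: the forward case-analysis and Part 3's inequality are routine, while supplying the realising $\half$-solution for an arbitrary complete labelling (and thereby pinning down the word ``exactly'') is where the argument must be done with care.
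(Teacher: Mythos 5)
Your proposal is correct and takes essentially the same route as the paper's source: the paper states Theorem \ref{FT1} without proof, citing \cite{459-4}, and the argument there is precisely your combination of Brouwer's fixed-point theorem on the compact convex cube $[0,1]^S$ for Part 1 with the complete-labelling case analysis, including the $\frac{1}{2}$-valued realisation of undecided nodes, for the $Eq_{\max}$ correspondence in Part 2. One cosmetic slip worth fixing: your strictness claim in Part 3 fails when some attacker has value $1$ (e.g.\ $y_1=1$, $y_2=\frac{1}{2}$ gives $0=0$), so it should read ``strict as soon as $\max_i y_i<1$ and at least two of the $y_i$ are positive,'' which does not affect the informal content of that part.
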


In the context of equations, a very natural step to take is to look at {\em Perturbations}.  If the equations describe a physical or economic system in equilibrium, we want to change the solution a bit (perturb the variables) and see how it affects the system. For example, when we go to the bank to negotiate a mortgage, we start with the amount we want to borrow and indicate for how many years we want the loan and then solve equations that tell us what the monthly payment is going to be. We then might change the amount or the number of years or even negotiate the interest rate if we find the monthly payments too high.

In the equational system arising from an argumentation network we can try and fix the value of some arguments and see what happens. In the equational context, this move is quite natural. We shall see later, that fixing some values to 0 in the equations of $Eq(\CA)$, amounts to adopting the CF2 semantics, when done in a certain way.  When done in other ways it gives the new loop-busting semantics LB.

\begin{example}\label{FE1}
Consider Figure \ref{FF2}.  The equations for this figure are (we use $Eq_{\rm inverse}$)
\begin{enumerate}
\item $\alpha =1-\phi$
\item $\beta =1-\alpha$
\item $\phi =1-\beta$
\item $\gamma =(1-\alpha)(1-\beta)(1-\phi)$
\item $\delta =1-\gamma$
\end{enumerate}
The solution here is 
\[\begin{array}{l}

\alpha=\beta=\phi=\half\\
\gamma=\frac{1}{8}\\
\delta=\frac{7}{8}
\end{array}
\]
Let us perturb the equation by adding an external force which makes a node equal zero. The best analogy I can think of is in electrical networks where you make the voltage of a node 0 by connecting it to earth.

Let $Z(x)$ be the ``earth'' connection for node $x$. We now do several perturbations as examples
\begin{enumerate}
\renewcommand{\labelenumi}{(\alph{enumi}).}
\item Let's choose to make $\phi=0$.  

We replace equation 3 by 
\item [$3^*a$.] $\phi =(1-\beta) Z(\phi)$
\item [$3^*b$.] $Z(\phi)=0$.\footnote{We use $Z$ and write $3^*a$ and $3^*b$, rather than just writing $3^*$ $a=0$ because of algebraic considerations. The current equations can be manipulated algebraically to to prove  $a=b=c$. By adding a fourth variable $Z(\phi)$ we prevent that.  }

The equations now solve to 
\[\begin{array}{l}
\phi=0, \alpha =1,\beta=0\\
\gamma=0\\
\delta=1.
\end{array}
\]
This gives us the extension $\{\alpha,\delta\}$
\item  If we try to make $\alpha=0$, we replace equation (1) by 
\item [$1^*a$.] $\alpha =(1-\phi)Z(\alpha)$
\item [$1^*b$.] $Z(\alpha) =0$

We solve the equations and get
\[
\begin{array}{l}
\alpha=0,\beta=1,\phi=0\\
\gamma=0\\
\delta=1
\end{array}
\]
This corresponds to the extension $\{\beta,\delta\}$.
\item Now let us make $\beta=0$. We replace equation (1) by 
\item [$2^*a$.] $\beta =(1-\alpha)Z(\beta)$
\item [$2^*b$.] $Z(\beta)=0$

Solving the new equations gives us 
\[\begin{array}{l}
\beta=0, \phi=1,\alpha=0\\
\gamma=0\\
\delta =1
\end{array}\]
This gives us the extension $\{\phi,\delta\}$.
\end{enumerate}

If we compare these extensions with the CF2 extensions, we see that they are the same.
\end{example}

\begin{example}\label{FE2}
Let us see what happens with Figure \ref{FF1}(b). Here we have a well behaved even loop. Let us write the equations 
\begin{enumerate}
\item $\alpha =1-\gamma$
\item $\delta =1-\alpha$
\item $\beta =1-\delta$
\item $\gamma =1-\beta$
\item $\phi =1-\gamma$
\end{enumerate}
Let us do some perturbations:
\begin{enumerate}
\renewcommand{\labelenumi}{(\alph{enumi})}
\item Let us make $\gamma =0$. We change equation 4 to 
\item [$4^*a$.] $\gamma =(1-\beta) Z(\gamma)$
\item [$4^*b$.] $Z(\gamma)=0$

We solve the new equations and get 
\[
\gamma =0, \alpha =1, \delta =0, \beta=1, \phi =1.
\]
The extension is $\{\alpha,\beta,\phi\}$.
\item Let us try $\alpha =0$. we replace equation 1 by 
\item [$1^*a$.]   $\alpha =(1-\gamma) Z(\alpha)$
\item [$1^*b$.] $Z (\alpha)=0$

We solve the new equations and get 
\[
\alpha=0, \delta =1, \beta=0,\gamma=1,\phi=0
\]
The extension we get is $\{\delta,\gamma\}$.

\item Let us make $\delta =0$. We replace equation 2 by 
\item [$2^*a$.] $\delta =(1-\alpha)Z(\delta)$
\item [$2^*b$.] $Z (\delta)=0$

The solution is 
\[
\delta =0, \alpha=1, \beta =1, \gamma=0 \mbox{ and } \phi=1
\]
This gives the extension 
\[
\{\alpha,\beta,\phi\}
\]
\item Let us make $\beta=0$. the new equations for $\beta$ are 
\item [$3^*a$.] $\beta =(1-\delta)Z(\beta)$
\item [$3^*b$.] $Z(\beta)=0$

We solve the new set of equations and get 
\[
\beta =0, \gamma=1,\phi=0, \alpha=0,\delta =1.
\]
The extension is $\{\gamma,\delta\}$.
\item Let us make $\phi =0$.  We change equation 5 to 
\item [$5^*a$.] $\phi =(1-\gamma)Z(\phi)$
\item [$5^*b$.] $Z(\phi)=0$

We solve the new equations. 

From (3) and (4) we get 
\item [5.] $\delta =\gamma$

From (1) and (2) we get 
\item [7.] $\alpha =\beta$.

Let $\alpha =\beta =x$.  Then $\gamma =\delta =1-x$.

If we want $\{0,1\}$ extensions, i.e.\ $x\in \{0,1\}$, then we get the extensions\\ $\{\alpha,\beta\}$, case $\{x=1, \phi=0\}$\\
$\{\gamma,\delta\}$, case $\{x=0, \phi=0\}$.

\item Let us make $\alpha =\gamma=0$.  The new equations are
\item [$1^*a$.] $\alpha =(1-\gamma)Z(\alpha)$
\item [$1^*b$.] $Z(\alpha)=0$
\item [2.] $\delta =1-\alpha$
\item [3.] $\beta =1-\delta$
\item [$4^*a$.] $\gamma =(1-\beta)Z(\gamma)$
\item [$4^*b$.] $Z(\gamma)=0$
\item [5.] $\phi =1-\gamma$.

The solution is 
\[\begin{array}{l}
\alpha =\gamma =0\\
\delta=1\\
\beta=0\\
\phi=1
\end{array}\]

The extension we get is $\{\delta,\phi\}$.

\item Let us summarise in Table \ref{A}.

\begin{table}[ht]
\begin{center}
\begin{tabular}{l|p{4cm}|p{4cm}}
Case&Set $B$ of points made 0 & Corresponding extensions\\
\hline
(a) &$B_a=\{\gamma\}$&$\{\alpha,\beta,\phi\}$\\
(b)&$B_b=\{\alpha\}$&$\{\delta,\gamma\}$\\
(c)& $B_c=\{\delta\} $&$\{\alpha,\beta,\phi\}$\\
(d)&$B_d=\{\beta\} $&$\{\gamma,\delta\} $\\
(e)&$B_e=\{\phi\} $&$\{\alpha,\beta\}, \{\gamma,\delta\} $\\
(f)&$B_f=\{\alpha,\gamma\} $&$\{\delta,\phi\} $\\
\hline
\end{tabular}
\end{center}
\caption{}\label{A}
\end{table}
\end{enumerate}
\end{example}

\section{The equational loop-busting semantics LB for complete loops}

We now introduce our loop busting semantics, the LB semantics for complete loops. We need  a series of concepts leading up to it.

\begin{definition}[Loops]\label{FD1}
Let $\CA =(S,R)$ be an argumentation network.
\begin{enumerate}
\item A subset $E =\{x_1\comma x_n\} \subseteq S$ is a loop cycle, (or a loop set, or a loop) if we have 
\[x_1Rx_2,x_2Rx_3\comma x_{n-1}Rx_n,x_nRx_1
\]
$(S, R)$ is said to be a complete loop if every element of $S$ is an element of some loop cycle.\footnote{Comparing with the terminology of  \cite{459-1}, a complete loop is a union of disjoint strongly connected sets.}

\item A set $B\subseteq S$ is a loop-buster if for every loop set $E$ we have $E \cap S\neq \varnothing$
\item Let $B\subseteq S$ be a loop-buster and let $\CM$ be a meta-predicate describing properties of $B$. We can talk about the semantics LB$\CM$, where, (when we define it later), we use only loop-busters $B$ such that $\CM(B)$ holds.  
Criteria for adequacy for LB$\CM$ are
\begin{enumerate}
\item It busts all odd numbered loops
\item It busts all even numbered loops and yields all allowable Dung extensions for such loops.\end{enumerate}
\item Our first two proposals for conditions $\CM$ on loop-busters is minimality. The idea is the smaller $B$ is, the more options we have.

Therefore, we define: A loop-buster set $B$ is minimal absolute if there is no loop-buster set $B'$ with a smaller number of elements (we do not require $B'\subseteq B$!).
\item A loop-buster set $B$ is minimal relative if there does not exist a $B' \varsubsetneqq B$ which is a loop-buster set.
\end{enumerate}
\end{definition}

\begin{example}[Loop-buster 1]\label{FE3}
Consider Figures \ref{FF5} and \ref{FF6}.

\begin{figure}
\centering
\setlength{\unitlength}{0.00083333in}
\begingroup\makeatletter\ifx\SetFigFont\undefined%
\gdef\SetFigFont#1#2#3#4#5{%
  \reset@font\fontsize{#1}{#2pt}%
  \fontfamily{#3}\fontseries{#4}\fontshape{#5}%
  \selectfont}%
\fi\endgroup%
{\renewcommand{\dashlinestretch}{30}
\begin{picture}(2015,971)(0,-10)
\put(322,806){\makebox(0,0)[b]{\smash{{\SetFigFont{10}{12.0}{\rmdefault}{\mddefault}{\updefault}$a$}}}}
\path(464,889)(466,889)(470,889)
	(477,889)(487,889)(501,889)
	(516,888)(533,888)(552,886)
	(572,884)(592,881)(615,877)
	(638,872)(664,864)(691,855)
	(719,844)(750,830)(776,816)
	(796,804)(811,794)(821,787)
	(828,781)(833,776)(837,771)
	(843,765)(851,756)(862,745)
	(877,731)(895,714)(914,694)
	(933,672)(947,652)(958,635)
	(964,620)(969,607)(972,596)
	(973,586)(974,566)
\blacken\path(938.045,684.352)(974.000,566.000)(997.970,687.348)(938.045,684.352)
\path(982,379)(981,377)(980,372)
	(978,364)(974,352)(970,338)
	(964,323)(957,306)(949,288)
	(939,269)(926,249)(911,228)
	(892,206)(869,184)(845,163)
	(822,146)(804,133)(790,124)
	(781,118)(775,113)(770,111)
	(764,108)(757,105)(746,101)
	(730,95)(707,88)(678,79)
	(644,71)(613,65)(584,62)
	(556,60)(531,59)(508,60)
	(486,61)(465,63)(446,66)
	(428,68)(412,71)(398,74)(374,79)
\blacken\path(497.596,83.895)(374.000,79.000)(485.359,25.156)(497.596,83.895)
\path(1035,424)(1035,422)(1037,417)
	(1038,410)(1041,399)(1045,386)
	(1050,371)(1056,355)(1064,338)
	(1073,320)(1085,301)(1100,281)
	(1118,259)(1140,236)(1161,217)
	(1181,201)(1198,187)(1211,177)
	(1221,169)(1229,164)(1234,160)
	(1239,157)(1244,154)(1250,151)
	(1258,147)(1270,141)(1287,135)
	(1308,127)(1334,118)(1364,109)
	(1394,102)(1423,97)(1450,94)
	(1475,93)(1497,93)(1519,93)
	(1539,95)(1558,97)(1575,100)
	(1591,102)(1604,104)(1627,109)
\blacken\path(1516.112,54.193)(1627.000,109.000)(1503.366,112.824)(1516.112,54.193)
\path(1874,244)(1876,246)(1879,250)
	(1885,256)(1893,265)(1903,276)
	(1914,289)(1925,304)(1936,320)
	(1947,337)(1958,357)(1969,379)
	(1978,404)(1987,431)(1993,458)
	(1998,482)(2000,501)(2002,514)
	(2003,523)(2003,529)(2003,533)
	(2003,537)(2002,543)(2000,552)
	(1998,565)(1993,584)(1987,607)
	(1979,634)(1968,661)(1957,685)
	(1944,706)(1932,725)(1920,742)
	(1907,757)(1895,771)(1883,784)
	(1873,794)(1852,814)
\blacken\path(1959.586,752.966)(1852.000,814.000)(1918.207,709.517)(1959.586,752.966)
\path(1612,926)(1609,927)(1604,928)
	(1594,930)(1581,933)(1564,936)
	(1545,939)(1525,942)(1502,944)
	(1478,944)(1452,943)(1423,940)
	(1391,935)(1357,926)(1328,916)
	(1302,906)(1281,897)(1264,890)
	(1252,884)(1244,879)(1237,875)
	(1233,872)(1228,868)(1222,864)
	(1215,858)(1205,849)(1191,838)
	(1174,823)(1153,805)(1132,784)
	(1109,759)(1090,735)(1075,713)
	(1062,692)(1052,673)(1044,655)
	(1037,637)(1031,622)(1027,608)(1019,581)
\blacken\path(1024.327,704.578)(1019.000,581.000)(1081.855,687.533)(1024.327,704.578)
\put(997,431){\makebox(0,0)[b]{\smash{{\SetFigFont{10}{12.0}{\rmdefault}{\mddefault}{\updefault}$b$}}}}
\put(1747,806){\makebox(0,0)[b]{\smash{{\SetFigFont{10}{12.0}{\rmdefault}{\mddefault}{\updefault}$x$}}}}
\put(1747,131){\makebox(0,0)[b]{\smash{{\SetFigFont{10}{12.0}{\rmdefault}{\mddefault}{\updefault}$y$}}}}
\put(247,56){\makebox(0,0)[b]{\smash{{\SetFigFont{10}{12.0}{\rmdefault}{\mddefault}{\updefault}$c$}}}}
\path(149,184)(147,186)(142,190)
	(135,198)(126,207)(115,220)
	(103,234)(91,250)(79,269)
	(68,291)(56,318)(45,349)
	(37,377)(31,401)(27,421)
	(23,435)(20,445)(17,452)
	(15,457)(13,462)(12,469)
	(12,480)(13,495)(16,516)
	(21,543)(30,574)(40,601)
	(53,627)(66,650)(79,670)
	(93,689)(108,707)(122,723)
	(136,738)(150,751)(163,763)
	(174,773)(194,791)
\blacken\path(124.874,688.425)(194.000,791.000)(84.736,733.023)(124.874,688.425)
\end{picture}
}
\caption{}\label{FF5}
\end{figure}

\begin{figure}
\centering
\setlength{\unitlength}{0.00083333in}
\begingroup\makeatletter\ifx\SetFigFont\undefined%
\gdef\SetFigFont#1#2#3#4#5{%
  \reset@font\fontsize{#1}{#2pt}%
  \fontfamily{#3}\fontseries{#4}\fontshape{#5}%
  \selectfont}%
\fi\endgroup%
{\renewcommand{\dashlinestretch}{30}
\begin{picture}(1778,1678)(0,-10)
\put(902,55){\makebox(0,0)[b]{\smash{{\SetFigFont{10}{12.0}{\rmdefault}{\mddefault}{\updefault}$c$}}}}
\path(1007,1578)(1009,1578)(1013,1578)
	(1021,1577)(1032,1576)(1046,1575)
	(1064,1573)(1084,1571)(1105,1567)
	(1129,1564)(1153,1559)(1179,1553)
	(1206,1546)(1235,1537)(1265,1527)
	(1298,1514)(1332,1498)(1367,1480)
	(1401,1460)(1431,1441)(1457,1424)
	(1477,1409)(1493,1398)(1504,1388)
	(1512,1381)(1517,1375)(1522,1370)
	(1526,1365)(1531,1359)(1538,1351)
	(1547,1340)(1559,1326)(1575,1308)
	(1594,1286)(1615,1261)(1637,1233)
	(1660,1201)(1680,1172)(1696,1144)
	(1709,1119)(1719,1097)(1728,1075)
	(1735,1056)(1741,1037)(1746,1020)
	(1750,1005)(1753,992)(1757,970)
\blacken\path(1706.018,1082.698)(1757.000,970.000)(1765.050,1093.431)(1706.018,1082.698)
\path(1765,828)(1765,826)(1765,823)
	(1765,816)(1766,807)(1766,794)
	(1766,779)(1765,761)(1764,742)
	(1762,722)(1759,700)(1755,677)
	(1749,653)(1741,627)(1731,599)
	(1718,570)(1701,538)(1682,505)
	(1660,473)(1639,444)(1619,419)
	(1603,399)(1589,384)(1578,372)
	(1569,364)(1562,357)(1556,352)
	(1549,347)(1542,341)(1532,334)
	(1519,324)(1502,311)(1480,296)
	(1454,277)(1423,256)(1390,235)
	(1356,215)(1323,198)(1293,184)
	(1265,173)(1239,163)(1215,156)
	(1192,150)(1171,145)(1151,141)
	(1132,138)(1115,135)(1100,133)
	(1088,132)(1067,130)
\blacken\path(1183.615,171.242)(1067.000,130.000)(1189.304,111.512)(1183.615,171.242)
\path(745,123)(743,123)(739,124)
	(732,126)(721,128)(707,132)
	(689,136)(668,141)(646,148)
	(621,154)(595,162)(569,171)
	(541,180)(513,190)(484,202)
	(454,215)(423,231)(390,248)
	(358,267)(325,288)(291,312)
	(262,336)(238,356)(220,373)
	(207,386)(198,397)(193,405)
	(189,411)(187,417)(185,422)
	(183,429)(179,437)(174,448)
	(165,463)(154,481)(140,504)
	(124,530)(107,558)(88,594)
	(72,626)(61,654)(52,680)
	(45,702)(41,722)(37,741)
	(35,757)(34,772)(32,798)
\blacken\path(71.115,680.654)(32.000,798.000)(11.292,676.053)(71.115,680.654)
\path(55,985)(56,987)(57,992)
	(59,1000)(62,1011)(66,1026)
	(71,1044)(77,1063)(84,1084)
	(93,1106)(103,1130)(114,1154)
	(129,1181)(146,1209)(166,1239)
	(190,1270)(212,1297)(234,1320)
	(253,1340)(269,1356)(281,1368)
	(291,1377)(298,1383)(304,1388)
	(309,1391)(314,1395)(320,1398)
	(329,1404)(339,1411)(354,1420)
	(373,1432)(396,1447)(423,1463)
	(452,1480)(485,1498)(517,1513)
	(546,1525)(572,1535)(596,1543)
	(619,1549)(640,1555)(659,1559)
	(677,1562)(693,1565)(706,1567)(730,1570)
\blacken\path(614.648,1525.347)(730.000,1570.000)(607.206,1584.884)(614.648,1525.347)
\put(880,1540){\makebox(0,0)[b]{\smash{{\SetFigFont{10}{12.0}{\rmdefault}{\mddefault}{\updefault}$a$}}}}
\put(1750,843){\makebox(0,0)[b]{\smash{{\SetFigFont{10}{12.0}{\rmdefault}{\mddefault}{\updefault}$b$}}}}
\put(40,843){\makebox(0,0)[b]{\smash{{\SetFigFont{10}{12.0}{\rmdefault}{\mddefault}{\updefault}$x$}}}}
\put(1702.043,880.224){\arc{2108.136}{2.3997}{3.8179}}
\blacken\path(834.666,1424.914)(880.000,1540.000)(785.841,1459.787)(834.666,1424.914)
\end{picture}
}
\caption{}\label{FF6}
\end{figure}

\begin{enumerate}
\item In Figure \ref{FF5} there are two loop sets, $\{a,b,c\}$ and $\{b,x,y\}$.  The loop-buster $\{b\}$ is minimal absolute and $\{y,c\}$ is minimal relative. The loop set $\{y,b\}$ is not minimal absolute.
\item Consider Figure \ref{FF6}. There are two loops $\{a,b,c\}$ and $\{a,b,c,x\}$.  The minimal absolute loop-buster sets are $\{c\}, \{a\}$. $\{x,b\}$ is not minimal relative.
\end{enumerate}
\end{example}

\begin{example}[Loop-buster 2]\label{FE4}

Consider Figure \ref{FF7}.

\begin{figure}
\centering
\setlength{\unitlength}{0.00083333in}
\begingroup\makeatletter\ifx\SetFigFont\undefined%
\gdef\SetFigFont#1#2#3#4#5{%
  \reset@font\fontsize{#1}{#2pt}%
  \fontfamily{#3}\fontseries{#4}\fontshape{#5}%
  \selectfont}%
\fi\endgroup%
{\renewcommand{\dashlinestretch}{30}
\begin{picture}(3544,2771)(0,-10)
\put(3529,1514){\makebox(0,0)[b]{\smash{{\SetFigFont{10}{12.0}{\rmdefault}{\mddefault}{\updefault}$a_3$}}}}
\path(701,2264)(2381,217)
\blacken\path(2281.681,290.727)(2381.000,217.000)(2328.061,328.792)(2281.681,290.727)
\path(1278,209)(2861,2279)
\blacken\path(2811.935,2165.454)(2861.000,2279.000)(2764.274,2201.902)(2811.935,2165.454)
\path(739,2414)(741,2415)(745,2416)
	(752,2419)(763,2424)(777,2430)
	(795,2437)(816,2445)(839,2454)
	(864,2464)(891,2475)(919,2485)
	(947,2496)(977,2507)(1009,2519)
	(1041,2530)(1076,2542)(1112,2555)
	(1150,2567)(1188,2579)(1234,2593)
	(1275,2604)(1310,2614)(1341,2621)
	(1368,2627)(1392,2632)(1413,2636)
	(1432,2639)(1449,2642)(1464,2643)
	(1476,2645)(1496,2647)
\blacken\path(1379.581,2605.208)(1496.000,2647.000)(1373.610,2664.911)(1379.581,2605.208)
\path(1953,2662)(1955,2662)(1958,2662)
	(1964,2662)(1974,2661)(1986,2661)
	(2001,2660)(2019,2659)(2038,2658)
	(2059,2656)(2082,2654)(2107,2651)
	(2134,2647)(2163,2642)(2195,2636)
	(2231,2629)(2271,2620)(2313,2609)
	(2352,2598)(2389,2588)(2424,2577)
	(2457,2566)(2488,2555)(2517,2545)
	(2544,2535)(2570,2525)(2594,2515)
	(2618,2506)(2640,2496)(2660,2488)
	(2678,2480)(2693,2473)(2706,2468)(2726,2459)
\blacken\path(2604.258,2480.886)(2726.000,2459.000)(2628.880,2535.601)(2604.258,2480.886)
\path(2974,2309)(2976,2308)(2979,2306)
	(2985,2302)(2994,2296)(3006,2288)
	(3021,2278)(3038,2266)(3057,2252)
	(3077,2237)(3099,2221)(3121,2203)
	(3144,2185)(3168,2164)(3192,2142)
	(3217,2118)(3244,2091)(3271,2061)
	(3299,2029)(3326,1994)(3352,1959)
	(3375,1924)(3396,1892)(3414,1861)
	(3429,1832)(3443,1805)(3454,1779)
	(3465,1754)(3474,1731)(3482,1709)
	(3489,1687)(3495,1668)(3500,1651)
	(3504,1636)(3508,1624)(3513,1604)
\blacken\path(3454.791,1713.141)(3513.000,1604.000)(3513.000,1727.693)(3454.791,1713.141)
\path(3528,1424)(3528,1422)(3528,1418)
	(3529,1411)(3529,1401)(3530,1389)
	(3531,1373)(3531,1356)(3532,1338)
	(3532,1318)(3532,1297)(3531,1274)
	(3530,1250)(3528,1223)(3525,1193)
	(3521,1162)(3516,1127)(3511,1097)
	(3507,1074)(3505,1058)(3503,1048)
	(3502,1041)(3502,1036)(3501,1032)
	(3499,1026)(3497,1016)(3492,1001)
	(3484,980)(3474,953)(3461,922)
	(3447,892)(3432,865)(3418,841)
	(3405,821)(3392,802)(3379,786)
	(3367,772)(3355,759)(3345,747)(3326,727)
\blacken\path(3386.900,834.662)(3326.000,727.000)(3430.400,793.337)(3386.900,834.662)
\path(3124,524)(3123,523)(3120,520)
	(3115,514)(3108,507)(3098,497)
	(3087,486)(3074,473)(3060,459)
	(3044,445)(3026,430)(3007,414)
	(2984,396)(2959,378)(2930,358)
	(2899,337)(2870,319)(2842,303)
	(2816,288)(2791,275)(2767,263)
	(2745,252)(2725,243)(2705,234)
	(2686,226)(2668,218)(2652,212)
	(2638,206)(2626,202)(2606,194)
\blacken\path(2706.275,266.421)(2606.000,194.000)(2728.559,210.713)(2706.275,266.421)
\path(2178,82)(2176,82)(2172,81)
	(2165,79)(2154,77)(2140,74)
	(2123,71)(2103,67)(2081,63)
	(2058,59)(2034,55)(2008,51)
	(1980,48)(1951,45)(1919,42)
	(1885,40)(1849,38)(1811,37)
	(1769,37)(1731,38)(1696,40)
	(1666,43)(1638,46)(1614,49)
	(1591,53)(1570,57)(1551,61)
	(1534,65)(1520,68)(1496,74)
\blacken\path(1619.693,74.000)(1496.000,74.000)(1605.141,15.791)(1619.693,74.000)
\path(1083,157)(1081,158)(1077,160)
	(1069,163)(1058,168)(1045,174)
	(1028,181)(1010,189)(991,197)
	(970,207)(948,217)(925,228)
	(900,240)(873,254)(844,268)
	(814,284)(781,302)(753,317)
	(732,329)(718,338)(708,344)
	(703,348)(699,351)(696,353)
	(691,357)(683,363)(670,371)
	(652,382)(629,396)(603,412)
	(574,430)(548,446)(527,459)
	(509,470)(494,479)(480,488)
	(469,495)(446,509)
\blacken\path(564.102,472.232)(446.000,509.000)(532.905,420.980)(564.102,472.232)
\path(281,704)(280,705)(277,707)
	(272,711)(264,716)(255,724)
	(244,734)(232,745)(219,758)
	(206,774)(192,793)(178,814)
	(163,841)(148,872)(132,909)
	(116,952)(104,989)(93,1026)
	(84,1062)(76,1097)(69,1131)
	(63,1164)(58,1195)(53,1226)
	(50,1256)(46,1285)(43,1313)
	(40,1338)(38,1361)(36,1381)
	(35,1397)(33,1424)
\blacken\path(71.783,1306.544)(33.000,1424.000)(11.947,1302.112)(71.783,1306.544)
\path(86,1649)(87,1651)(88,1655)
	(91,1662)(96,1673)(101,1686)
	(108,1702)(116,1720)(124,1739)
	(134,1760)(144,1782)(156,1805)
	(169,1830)(185,1858)(202,1888)
	(221,1919)(241,1950)(259,1977)
	(274,1999)(286,2016)(295,2029)
	(302,2038)(307,2045)(312,2051)
	(316,2056)(321,2062)(328,2070)
	(337,2080)(349,2094)(364,2111)
	(382,2131)(401,2152)(425,2177)
	(446,2196)(464,2210)(478,2220)
	(490,2228)(501,2233)(510,2237)(528,2242)
\blacken\path(420.407,2180.977)(528.000,2242.000)(404.349,2238.788)(420.407,2180.977)
\put(1713,2609){\makebox(0,0)[b]{\smash{{\SetFigFont{10}{12.0}{\rmdefault}{\mddefault}{\updefault}$a_1$}}}}
\put(2891,2339){\makebox(0,0)[b]{\smash{{\SetFigFont{10}{12.0}{\rmdefault}{\mddefault}{\updefault}$a_2$}}}}
\put(626,2309){\makebox(0,0)[b]{\smash{{\SetFigFont{10}{12.0}{\rmdefault}{\mddefault}{\updefault}$a_9$}}}}
\put(1308,82){\makebox(0,0)[b]{\smash{{\SetFigFont{10}{12.0}{\rmdefault}{\mddefault}{\updefault}$a_6$}}}}
\put(2411,82){\makebox(0,0)[b]{\smash{{\SetFigFont{10}{12.0}{\rmdefault}{\mddefault}{\updefault}$a_5$}}}}
\put(3221,584){\makebox(0,0)[b]{\smash{{\SetFigFont{10}{12.0}{\rmdefault}{\mddefault}{\updefault}$a_4$}}}}
\put(63,1522){\makebox(0,0)[b]{\smash{{\SetFigFont{10}{12.0}{\rmdefault}{\mddefault}{\updefault}$a_8$}}}}
\put(386,569){\makebox(0,0)[b]{\smash{{\SetFigFont{10}{12.0}{\rmdefault}{\mddefault}{\updefault}$a_7$}}}}
\path(3326,1574)(310,1574)
\blacken\path(430.000,1604.000)(310.000,1574.000)(430.000,1544.000)(430.000,1604.000)
\end{picture}
}
\caption{}\label{FF7}
\end{figure}

The loops in this figure are many.  For example, we list some
\begin{enumerate}
\item $\{a_1,a_2,a_3,a_4,a_5,a_6,a_7,a_8,a_9\}$
\item $\{a_6,a_2,a_3,a_4,a_5\}$
\item $\{a_3,a_8,a_9,a_1,a_2\}$
\item $\{a_9,a_5,a_6,a_7,a_8\}$
\end{enumerate}
Consider the loop-buster
\[
\{a_2,a_5,a_8\}
\]
This is not a minimal absolute set but if we delete one of its elements we get a minimal absolute set. No one element is a loop-buster.
\end{example}

\begin{definition}[The loop-busting semantics LB$\CM$ for complete loops]\label{FD2}
Let $\CA=(S,R)$ be an argumentation network.  Assume that $( S,R)$ is a complete loop, namely that each of  its elements belongs to some loop cycle, as defined in item 1 of Definition \ref{FD1}.  We define the LB$\CM$ extensions for $\CA$ as follows.
\begin{enumerate}
\item Let $B$ be a loop-buster for $\CA$ satisfying $\CM$.
\item Let $Eq_{\max}(\CA)$ be the system of equations generated by $\CA$.  These have the form 
\[
({\bf eq}(x)): x =\Bh_x(y_1\comma y_{k(x)})
\]
where $x\in S$, and $y_1\comma y_{k(x)}$ are all the attackers of $x$. If $x$ has no attackers then $h_x \equiv 1$.
\item For each $x\in B$ replace the equation ${\bf eq}(x)$ by the two new equations 
\begin{itemize}
\item $({\bf eq}^*_a(x)): x=\Bh_x(y_1\comma y_{k(x)}) Z(x)$
\item $({\bf eq}^*_b(x)): Z(x)=0$
\end{itemize}
where $Z(x)$ is a new variable syntactically depending on $x$ alone.
\item Solve the equations in (3) and let $\Bf_B$ be any solution.  

Then the set 
\[
E_{{\Bf},B} =\{x\in S |\Bf_B(x)=1\}
\]
is an LB$\CM$ extension.
\item Thus the set of all LB$\CM$ extensions for $\CA=(S,R)$ is the set 
\[
\{E_{{\Bf},B}|B\mbox{ is as in (1), $\Bf_B$ is as in (4) and $E_{{\Bf},B}$ is as in (4)\}}
\]
\end{enumerate}
\end{definition}

Note that our definition of extension for a general network will be given in the next section.

Before we prove soundness of LB$\CM$ relative to the traditional Dung semantics and compare LB$\CM$ with CF2 semantics, let us do some examples. We use Figures \ref{FF2} and \ref{FF1}(b).

\begin{example}\label{FE5}
Consider Figure \ref{FF2}.  The only loop here is $\{\alpha, \beta,\phi\}$. There are three minimal absolute loop-busting sets, $B_\alpha =\{\alpha\}, B_\beta =\{\beta\}$ and $B_\phi =\{\phi\}$.

For each one of these sets we need to modify the equations of Figure \ref{FF2} and solve them and see what extensions we get. This has already been done in Example \ref{FE1}, parts (a), (b) and (c).

In (a) we made $\phi=0$, i.e.\ we used the loop-busting set $B_\phi$. We solved the modified equations and got the extension $\{\alpha,\delta\} =E_\phi$. In (b) we made $\alpha=0$, i.e.\ we used the set $B_\alpha$, solved the modified equations and got the extension $E_\alpha =\{\beta,\delta\}$.

In (c) we made $\beta=0$, i.e.\ we used the set $B_\alpha$, solved the modified equations and got the extension $E_\beta=\{\phi,\delta\}$. 

Let us now compare with the CF2 extensions for the figure (Figure \ref{FF2}).  The maximal conflict free sets of the first loop $\{\alpha,\beta,\phi\}$ are $C_\alpha =\{\alpha\}, C_\beta =\{\beta\}$ and $C_\phi=\{\phi\}$.  They are the same as our loop-busting sets, but they are used differently. They are supposed to be in (i.e.\ value 1) not out (value 0). We use $C_\alpha, C_\beta, C_\phi$ to calculate the CF2 extensions and get $\{\alpha,\delta\}, \{\beta,\delta\}$ and $\{\phi,\delta\}$, indeed the same as the LB extensions.
\end{example}

\begin{example}\label{FE6}
We now consider Figure \ref{FF1}(b).  The only minimal absolute loop-buster set here is $B_\gamma=\{\gamma\}$.  We have three more minimal relative sets, $B_1=\{\beta,\phi\}, B_2=\{\delta,\phi\}$ and $B_3=\{\alpha,\phi\}$.

We refer the reader to Example \ref{FE2}, where some equational calculations for this figure are carried out.  
\begin{enumerate}
\item In (a) of Example \ref{FE2}, we make $\gamma =0$, we solve the modified equation and get the extension $E_\gamma =\{\alpha,\beta,\phi\}$.

This takes care of the case $B_\gamma=\{\gamma\}$.
\item Let us address the case of $B_3=\{\alpha,\phi\}$.  We use (b) of Example \ref{FE2}, where we make $\alpha=0$.  We modify the equation for $\alpha$ and get a solution $\alpha=0, \delta=1, \beta=0, \gamma=1$ and $\phi=0$.

We needed to also make $\phi=0$ for the loop-buster set $\{\alpha,\phi\}$, but as it turns out, making $\alpha=0$ also makes $\phi=0$.  We thus get the extension $E_{\alpha,\phi} =\{\delta,\gamma\}$.
\item Let us address the case of $B_1=\{\beta,\phi\}$.  This corresponds to case (d) $\beta=0$ of Example \ref{FE2}.  We modify the equations and solve them and get $\beta=0, \gamma=1,\phi=0, \alpha=0$ and $\delta=1$.

The extension is $\{\gamma,\delta\}$. 

Again, although we did not explicitly make the requirement $\phi=0$, the equations obtained from the requirement $\alpha=0$ did the job for us.
\item We now check the case of $B_2=\{\delta,\phi\}$. Here we get a discrepancy with case (c) of Example \ref{FE2}.

There, in case (c), we only require $\delta=0$, solve the equations and get the extension $\{\alpha,\beta,\phi\}$. This is not what we want, as we also require $\phi=0$. So let us do the calculation in detail here.

The modified equation system for $B_1 =\{\delta,\phi\}$ is the following:
\begin{itemlist}{xxxx}
\item [1.] $\alpha =1-\gamma$
\item [$2^*a$.] $\delta =(1-\alpha) Z(\delta)$
\item [$2^*b$.] $Z(\delta)=0$.
\item [3.] $\beta =1-\delta$
\item [5.] $\gamma=1-\beta$
\item [$5^*a$.] $\gamma =\phi =(1-\gamma) Z(\phi)$
\item [$5^*b$.] $Z(\phi)=0$.
\end{itemlist}
We solve the equations and get $\phi =0, \delta=0, \beta=1, \gamma=0, \alpha=1$.

The extension is $\{\alpha,\beta\}$.
\end{enumerate}
\end{example}

\begin{example}[CF2 and the LB minimal absolute semantics]\label{EFeb05-1}
The LB minimal absolute semantics does not give all the CF2 extensions in the case of even loops. Consider Figure \ref{FF4b}. The set $B =\{a_{10}\}$ yields the extension $E_B =\{b,a_1,a_3,a_5,a_7,a_0\}$. $B$ is minimal absolute. Consider now $B'$ being $B'=\{a_{10}, a_3,a_6\}$.

This yields 
\[
E_{B'} =\{a_1,a_4,a_7,a_9\}
\]
However, $B'$ is not minimal absolute. $E_{B'}$ is a CF2 extension. $B'$ is a  minimal relative set.

What happens here is that the minimal absolute semantics gives the same extensions for even loops as the traditional Dung extensions, but the CF2 semantics gives more. This is a weakness of CF2.
\end{example}

\begin{remark}[CF2 and the minimal relative extensions]\label{FR1}
Let us discuss the results of Example \ref{FE6} calculated for Figure \ref{FF1}(b) and compare them with the CF2 extensions of Figure \ref{FF1}(b).  This will give us an idea about the relation of CF2 to  the minimal relative semantics. We use Example \ref{FE2}, where all the extensions were calculated and especially refer to Table \ref{A}, given in item (g) of Example \ref{FE2}, which summarises these calculations.
\begin{enumerate}
\item The CF2 extensions are all the conflict free subsets. These are $\{\delta,\phi\}, \{\delta,\gamma\}, \{\alpha,\beta,\phi\}$.

Comparing with the  semantics of Table \ref{A}, we get the following: the LB minimal absolute extensions are one only, namely $\{\delta,\gamma\}$. The LB minimal relative extensions are $\{\delta,\gamma\}, \{\alpha,\beta,\phi\}$ and $\{\alpha,\beta\},\{\gamma,\phi\}$.

We see that LB minimal absolute gives less extensions (but breaks loops) while LB minimal relative gives one more extension. Obviously we need to identify a policy $\CM$ which will yield exactly  the CF2 extensions.
\item Let us examine case (4) of Example \ref{FE6} more closely.  This is the case of $B_2 =\{\delta,\phi\}$ of Figure \ref{FF1}(b).  The loop-buster set $B_2$ was introduced to bust two loops. The loop $\{\alpha,\beta,\gamma,\delta\}$ and the loop $\{\gamma,\phi\}$.  $\delta$ was included to bust the first loop and $\gamma$ was included to bust the second loop.  Our equational computations show in case (c) of Example \ref{FE2} that if we start with $\delta=0$ we get that it follows that $\gamma=0$. But $\gamma$ belongs also to the second loop $\{\gamma,\phi\}$. So $\{\delta\}$ on its own is a loop-buster for both loops and we do not need to include $\phi$ in the loop-buster.  So $B_1=\{\delta,\phi\}$ is not minimal relative because $B'_2=\{\delta\}$ can do the job.     The above considerations show that the definition of minimal relative loop-busting sets needs to be adjusted.  This needs to be done in a methodologically correct manner and will be addressed in the next section. 
\end{enumerate}
Note that if we accept that $B'_2 =\{\delta\}$ is the minimal relative loop-busting set, then the calculated extension for this case is $E_\delta =\{\delta,\phi\}$, in complete agreement with the CF2 semantics!
\end{remark}

We now need to demonstrate the soundness of the LB semantics. The perceptive reader will ask himself, how do the LB extensions relate to the extensions of traditional Dung semantics?  After all, we start with the standard equational semantics, which for the case of $Eq_{\max}$ is identical with the Dung semantics, but then using a loop-busting set $B$ of one kind or another, we get a new set of equations and call the solutions LB extensions.  What are these solutions and what meaning can we give them?

Obviously, we need some sort of soundness result.  This is the job of the next theorem.

\begin{theorem}[Representation theorem for LB semantics]\label{FT2}
Let $\CA =(S,R)$ be an argumentation net being a complete loop as in Definition \ref{FD2} and let $B$ be a loop-busting subset of $S$ (of some sort $\CM$).  Let $\BE(B,\CA)$ be the family of LB extensions obtained from $\CA$ and $B$ by following the procedures of Definition \ref{FD2}.  Then $\BE(B,\CA)$ can be obtained also following the procedure below
\begin{enumerate}
\item For each $x \in B$, let $\Bz(x)$ be a new point not in $S$. Let $\Bz(x)$ be all different for different $x$s.
\item Define $(S_B, R_B)$ as follows:
\[\begin{array}{l}
S_B=S\cup \{\Bz(x)|x\in B\}\\
R_B=R\cup \{(\Bz(x),x)|x\in B\}.
\end{array}\]
\item The network $(S_B, R_B)$ is an ordinary Dung network and has traditional Dung extensions. We have (for $Eq_{\max}$): 
\[
\BE(B,\CA)=\{E\cap S | E\mbox{ is an extension of }(S_B,R_B)\}
\]
\end{enumerate}
\end{theorem}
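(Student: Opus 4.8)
The plan is to show that both routes to the extension family --- the perturbed equation system of Definition~\ref{FD2} and the enlarged Dung network $(S_B,R_B)$ --- collapse to one and the same system of equations on the variables of $S$, after which item~2 of Theorem~\ref{FT1} translates solutions into extensions. Concretely, I would introduce the \emph{pinned system} $P_B$ on the variables $S$, consisting of $x=0$ for every $x\in B$ together with the unchanged equation $x=1-\max(y_1\comma y_{k(x)})$ for every $x\notin B$, and prove that the solution sets of the two constructions, projected to $S$, each coincide exactly with the solution set of $P_B$.

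First I would analyse $(S_B,R_B)$. Each new node $\Bz(x)$ has no attacker in $R_B$, so its $Eq_{\max}$ equation forces $\Bz(x)=1$ in every solution. The attackers of an $x\in B$ in $R_B$ are its original attackers together with $\Bz(x)$, so its equation reads $x=1-\max(y_1\comma y_{k(x)},\Bz(x))$; substituting $\Bz(x)=1$ and using $y_i\in[0,1]$ gives $\max(\ldots,1)=1$ and hence $x=0$. For $x\notin B$ no edge is added, so its equation is the original one. Thus, after eliminating the determined variables $\Bz(x)$, the equations of $(S_B,R_B)$ restricted to $S$ are exactly $P_B$, and $\Bf\mapsto\Bf|_S$ is a bijection between solutions of $Eq_{\max}(S_B,R_B)$ and solutions of $P_B$ (the unique extension back is $\Bz(x)=1$).

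Next I would analyse the perturbed system of Definition~\ref{FD2}. There $Z(x)=0$ is forced in every solution, and then $x=\Bh_x(y_1\comma y_{k(x)})Z(x)=\Bh_x(\ldots)\cdot 0=0$, again because $\Bh_x$ takes values in $[0,1]$; for $x\notin B$ the equation is untouched. Hence, after eliminating the determined variables $Z(x)$, the perturbed system restricted to $S$ is again exactly $P_B$, so $\Bf_B\mapsto\Bf_B|_S$ is a bijection between its solutions and solutions of $P_B$. This is the crux of the matter, and the only point that needs care: the \emph{multiplicative} pinning $\Bh_x\cdot 0$ and the \emph{graph-theoretic} pinning by an unattacked attacker are syntactically different equations, but both evaluate to $0$, so they impose the same constraint on $S$.

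Finally I would assemble the statement. By item~2 of Theorem~\ref{FT1} applied to the ordinary Dung network $(S_B,R_B)$, its Dung extensions are exactly the sets $\{v\in S_B\mid \Bf(v)=1\}$ as $\Bf$ ranges over solutions of $Eq_{\max}(S_B,R_B)$. Each $\Bz(x)$ has value $1$ and lies outside $S$, so intersecting such an extension with $S$ simply drops the $\Bz(x)$ and yields $\{x\in S\mid \Bf|_S(x)=1\}$. As $\Bf|_S$ ranges exactly over the solutions of $P_B$, and by the second bijection these are precisely the $\Bf_B|_S$ arising in Definition~\ref{FD2}, the family $\{E\cap S\mid E\text{ a Dung extension of }(S_B,R_B)\}$ equals $\{E_{\Bf,B}\mid \Bf_B\text{ solves the perturbed system}\}=\BE(B,\CA)$, which is the desired identity. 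The only genuine verifications are the two pinning computations above and the observation that unattacked nodes are forced to value $1$ and hence drop out cleanly under intersection with $S$; I do not expect any deeper obstacle.
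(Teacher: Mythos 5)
Your proof is correct and takes essentially the same route as the paper: both arguments hinge on the observation that the unattacked node $\Bz(x)$ is forced to value $1$, so the $Eq_{\max}$ equation for $x\in B$ in $(S_B,R_B)$ collapses to the perturbed equation of Definition~\ref{FD2} --- the paper does this directly via the change of variable $Z(x)=1-\Bz(x)$ and the factorization $1-\max(y_1\comma y_{k(x)},\Bz(x))=(1-\max(y_1\comma y_{k(x)}))(1-\Bz(x))$, whereas you factor both systems through the intermediate pinned system $P_B$ with $x=0$, a purely cosmetic difference. Your final paragraph, invoking item~2 of Theorem~\ref{FT1} and checking that the $\Bz(x)$ drop out cleanly under intersection with $S$, merely spells out carefully what the paper's proof leaves implicit.
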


\begin{proof}
The new equations for each $x\in B$ in $(S_B,R_B)$ are
\[\begin{array}{l}
({\bf eq}^*_a(x)): x=1-\max(y_1\comma y_{k(x)},\Bz(x))\\
({\bf eq}^*_a(\Bz(x))):\Bz(x)=1
\end{array}\]
where $y_1\comma y_{k(x)}$ are all the attackers of $x$ in $(S,R)$.

Since $\Bz(x)=1$, we get that 
\[\begin{array}{rcl}
1-\max(y_2\comma y_{k(x)},\Bz(x)) &=& (1-\max(y_1\comma y_{k(x)})(1-\Bz(x))\\
&=& (1-\max(y_1\comma y_{k(x)} )) Z(x)
\end{array}
\]
provided $Z(x) =1-\Bz(x)$.

Of course $\Bz(x)=1$ means $Z(x)=0$.

So we get the same modified equations as required by the LB semantics in Definition \ref{FD2}.
\end{proof}

\begin{example}\label{FE7}
Let us represent the cases of Example \ref{FE6}, which dealt with Figure \ref{FF1}(b). See Figures \ref{FF8}, \ref{FF9}, \ref{FF10}, \ref{FF11} corresponding to cases (1)--(4) of Example \ref{FE6}.

We are also adding Figure \ref{FF12}, describing the situation for $B'_2 =\{\delta\}$ as discussed in Remark \ref{FR1} in item (b).
\end{example}

\begin{figure}
\centering
\setlength{\unitlength}{0.00083333in}
\begingroup\makeatletter\ifx\SetFigFont\undefined%
\gdef\SetFigFont#1#2#3#4#5{%
  \reset@font\fontsize{#1}{#2pt}%
  \fontfamily{#3}\fontseries{#4}\fontshape{#5}%
  \selectfont}%
\fi\endgroup%
{\renewcommand{\dashlinestretch}{30}
\begin{picture}(2730,1639)(0,-10)
\put(2565,1489){\makebox(0,0)[b]{\smash{{\SetFigFont{10}{12.0}{\rmdefault}{\mddefault}{\updefault}$\Bz(\gamma)$}}}}
\put(1927.500,2576.500){\arc{4275.658}{1.1935}{1.9480}}
\blacken\path(1263.040,576.310)(1140.000,589.000)(1242.593,519.901)(1263.040,576.310)
\path(2565,1414)(1065,814)
\blacken\path(1165.275,886.421)(1065.000,814.000)(1187.559,830.713)(1165.275,886.421)
\path(615,1189)(1065,814)
\blacken\path(953.608,867.775)(1065.000,814.000)(992.019,913.869)(953.608,867.775)
\blacken\path(542.019,1089.131)(615.000,1189.000)(503.608,1135.225)(542.019,1089.131)
\path(615,1189)(165,814)
\path(615,214)(165,589)
\blacken\path(276.392,535.225)(165.000,589.000)(237.981,489.131)(276.392,535.225)
\blacken\path(687.981,313.869)(615.000,214.000)(726.392,267.775)(687.981,313.869)
\path(615,214)(1065,589)
\put(615,1264){\makebox(0,0)[b]{\smash{{\SetFigFont{10}{12.0}{\rmdefault}{\mddefault}{\updefault}$\beta$}}}}
\put(615,64){\makebox(0,0)[b]{\smash{{\SetFigFont{10}{12.0}{\rmdefault}{\mddefault}{\updefault}$\alpha$}}}}
\put(1065,664){\makebox(0,0)[b]{\smash{{\SetFigFont{10}{12.0}{\rmdefault}{\mddefault}{\updefault}$\gamma$}}}}
\put(15,664){\makebox(0,0)[b]{\smash{{\SetFigFont{10}{12.0}{\rmdefault}{\mddefault}{\updefault}$\delta$}}}}
\put(2715,664){\makebox(0,0)[b]{\smash{{\SetFigFont{10}{12.0}{\rmdefault}{\mddefault}{\updefault}$\phi$}}}}
\put(1927.500,-1173.500){\arc{4275.658}{4.3351}{5.0896}}
\blacken\path(2591.960,826.690)(2715.000,814.000)(2612.407,883.099)(2591.960,826.690)
\end{picture}
}
\caption{Case (1): $\gamma=0$ for $B_\gamma=\{0\}$}\label{FF8}
\end{figure}

\begin{figure}
\centering
\setlength{\unitlength}{0.00083333in}
\begingroup\makeatletter\ifx\SetFigFont\undefined%
\gdef\SetFigFont#1#2#3#4#5{%
  \reset@font\fontsize{#1}{#2pt}%
  \fontfamily{#3}\fontseries{#4}\fontshape{#5}%
  \selectfont}%
\fi\endgroup%
{\renewcommand{\dashlinestretch}{30}
\begin{picture}(2880,2326)(0,-10)
\put(2865,2164){\makebox(0,0)[b]{\smash{{\SetFigFont{10}{12.0}{\rmdefault}{\mddefault}{\updefault}$\Bz(\phi)$}}}}
\put(1927.500,3251.500){\arc{4275.658}{1.1935}{1.9480}}
\blacken\path(1263.040,1251.310)(1140.000,1264.000)(1242.593,1194.901)(1263.040,1251.310)
\path(615,1864)(1065,1489)
\blacken\path(953.608,1542.775)(1065.000,1489.000)(992.019,1588.869)(953.608,1542.775)
\blacken\path(542.019,1764.131)(615.000,1864.000)(503.608,1810.225)(542.019,1764.131)
\path(615,1864)(165,1489)
\path(615,889)(165,1264)
\blacken\path(276.392,1210.225)(165.000,1264.000)(237.981,1164.131)(276.392,1210.225)
\blacken\path(687.981,988.869)(615.000,889.000)(726.392,942.775)(687.981,988.869)
\path(615,889)(1065,1264)
\path(1365,214)(690,664)
\blacken\path(806.487,622.397)(690.000,664.000)(773.205,572.474)(806.487,622.397)
\path(2790,2089)(2790,1489)
\blacken\path(2760.000,1609.000)(2790.000,1489.000)(2820.000,1609.000)(2760.000,1609.000)
\put(615,1939){\makebox(0,0)[b]{\smash{{\SetFigFont{10}{12.0}{\rmdefault}{\mddefault}{\updefault}$\beta$}}}}
\put(615,739){\makebox(0,0)[b]{\smash{{\SetFigFont{10}{12.0}{\rmdefault}{\mddefault}{\updefault}$\alpha$}}}}
\put(1065,1339){\makebox(0,0)[b]{\smash{{\SetFigFont{10}{12.0}{\rmdefault}{\mddefault}{\updefault}$\gamma$}}}}
\put(15,1339){\makebox(0,0)[b]{\smash{{\SetFigFont{10}{12.0}{\rmdefault}{\mddefault}{\updefault}$\delta$}}}}
\put(2715,1339){\makebox(0,0)[b]{\smash{{\SetFigFont{10}{12.0}{\rmdefault}{\mddefault}{\updefault}$\phi$}}}}
\put(1440,64){\makebox(0,0)[b]{\smash{{\SetFigFont{10}{12.0}{\rmdefault}{\mddefault}{\updefault}$\Bz(\alpha)$}}}}
\put(1927.500,-498.500){\arc{4275.658}{4.3351}{5.0896}}
\blacken\path(2591.960,1501.690)(2715.000,1489.000)(2612.407,1558.099)(2591.960,1501.690)
\end{picture}
}
\caption{Case (2): $B_3=\{\alpha,\phi\}$}\label{FF9}
\end{figure}

\begin{figure}
\centering
\setlength{\unitlength}{0.00083333in}
\begingroup\makeatletter\ifx\SetFigFont\undefined%
\gdef\SetFigFont#1#2#3#4#5{%
  \reset@font\fontsize{#1}{#2pt}%
  \fontfamily{#3}\fontseries{#4}\fontshape{#5}%
  \selectfont}%
\fi\endgroup%
{\renewcommand{\dashlinestretch}{30}
\begin{picture}(2880,2014)(0,-10)
\put(2865,1489){\makebox(0,0)[b]{\smash{{\SetFigFont{10}{12.0}{\rmdefault}{\mddefault}{\updefault}$\Bz(\phi)$}}}}
\put(1927.500,2576.500){\arc{4275.658}{1.1935}{1.9480}}
\blacken\path(1263.040,576.310)(1140.000,589.000)(1242.593,519.901)(1263.040,576.310)
\path(615,1189)(1065,814)
\blacken\path(953.608,867.775)(1065.000,814.000)(992.019,913.869)(953.608,867.775)
\blacken\path(542.019,1089.131)(615.000,1189.000)(503.608,1135.225)(542.019,1089.131)
\path(615,1189)(165,814)
\path(615,214)(165,589)
\blacken\path(276.392,535.225)(165.000,589.000)(237.981,489.131)(276.392,535.225)
\blacken\path(687.981,313.869)(615.000,214.000)(726.392,267.775)(687.981,313.869)
\path(615,214)(1065,589)
\path(2790,1414)(2790,814)
\blacken\path(2760.000,934.000)(2790.000,814.000)(2820.000,934.000)(2760.000,934.000)
\path(615,1789)(615,1414)
\blacken\path(585.000,1534.000)(615.000,1414.000)(645.000,1534.000)(585.000,1534.000)
\put(615,1264){\makebox(0,0)[b]{\smash{{\SetFigFont{10}{12.0}{\rmdefault}{\mddefault}{\updefault}$\beta$}}}}
\put(615,64){\makebox(0,0)[b]{\smash{{\SetFigFont{10}{12.0}{\rmdefault}{\mddefault}{\updefault}$\alpha$}}}}
\put(1065,664){\makebox(0,0)[b]{\smash{{\SetFigFont{10}{12.0}{\rmdefault}{\mddefault}{\updefault}$\gamma$}}}}
\put(15,664){\makebox(0,0)[b]{\smash{{\SetFigFont{10}{12.0}{\rmdefault}{\mddefault}{\updefault}$\delta$}}}}
\put(2715,664){\makebox(0,0)[b]{\smash{{\SetFigFont{10}{12.0}{\rmdefault}{\mddefault}{\updefault}$\phi$}}}}
\put(615,1864){\makebox(0,0)[b]{\smash{{\SetFigFont{10}{12.0}{\rmdefault}{\mddefault}{\updefault}$\Bz(\beta)$}}}}
\put(1927.500,-1173.500){\arc{4275.658}{4.3351}{5.0896}}
\blacken\path(2591.960,826.690)(2715.000,814.000)(2612.407,883.099)(2591.960,826.690)
\end{picture}
}
\caption{Case (3): $B_1=\{\beta,\phi\}$}\label{FF10}
\end{figure}

\begin{figure}
\centering
\setlength{\unitlength}{0.00083333in}
\begingroup\makeatletter\ifx\SetFigFont\undefined%
\gdef\SetFigFont#1#2#3#4#5{%
  \reset@font\fontsize{#1}{#2pt}%
  \fontfamily{#3}\fontseries{#4}\fontshape{#5}%
  \selectfont}%
\fi\endgroup%
{\renewcommand{\dashlinestretch}{30}
\begin{picture}(2907,1789)(0,-10)
\put(42,1639){\makebox(0,0)[b]{\smash{{\SetFigFont{10}{12.0}{\rmdefault}{\mddefault}{\updefault}$\Bz(\delta)$}}}}
\put(1954.500,2576.500){\arc{4275.658}{1.1935}{1.9480}}
\blacken\path(1290.040,576.310)(1167.000,589.000)(1269.593,519.901)(1290.040,576.310)
\path(642,1189)(1092,814)
\blacken\path(980.608,867.775)(1092.000,814.000)(1019.019,913.869)(980.608,867.775)
\blacken\path(569.019,1089.131)(642.000,1189.000)(530.608,1135.225)(569.019,1089.131)
\path(642,1189)(192,814)
\path(642,214)(192,589)
\blacken\path(303.392,535.225)(192.000,589.000)(264.981,489.131)(303.392,535.225)
\blacken\path(714.981,313.869)(642.000,214.000)(753.392,267.775)(714.981,313.869)
\path(642,214)(1092,589)
\path(2817,1414)(2817,814)
\blacken\path(2787.000,934.000)(2817.000,814.000)(2847.000,934.000)(2787.000,934.000)
\path(42,1564)(42,814)
\blacken\path(12.000,934.000)(42.000,814.000)(72.000,934.000)(12.000,934.000)
\put(642,1264){\makebox(0,0)[b]{\smash{{\SetFigFont{10}{12.0}{\rmdefault}{\mddefault}{\updefault}$\beta$}}}}
\put(642,64){\makebox(0,0)[b]{\smash{{\SetFigFont{10}{12.0}{\rmdefault}{\mddefault}{\updefault}$\alpha$}}}}
\put(1092,664){\makebox(0,0)[b]{\smash{{\SetFigFont{10}{12.0}{\rmdefault}{\mddefault}{\updefault}$\gamma$}}}}
\put(42,664){\makebox(0,0)[b]{\smash{{\SetFigFont{10}{12.0}{\rmdefault}{\mddefault}{\updefault}$\delta$}}}}
\put(2742,664){\makebox(0,0)[b]{\smash{{\SetFigFont{10}{12.0}{\rmdefault}{\mddefault}{\updefault}$\phi$}}}}
\put(2892,1489){\makebox(0,0)[b]{\smash{{\SetFigFont{10}{12.0}{\rmdefault}{\mddefault}{\updefault}$\Bz(\phi)$}}}}
\put(1954.500,-1173.500){\arc{4275.658}{4.3351}{5.0896}}
\blacken\path(2618.960,826.690)(2742.000,814.000)(2639.407,883.099)(2618.960,826.690)
\end{picture}
}
\caption{Case (4): $B_2=\{\delta,\phi\}$}\label{FF11}
\end{figure}

\begin{figure}
\centering
\setlength{\unitlength}{0.00083333in}
\begingroup\makeatletter\ifx\SetFigFont\undefined%
\gdef\SetFigFont#1#2#3#4#5{%
  \reset@font\fontsize{#1}{#2pt}%
  \fontfamily{#3}\fontseries{#4}\fontshape{#5}%
  \selectfont}%
\fi\endgroup%
{\renewcommand{\dashlinestretch}{30}
\begin{picture}(2757,1789)(0,-10)
\put(42,1639){\makebox(0,0)[b]{\smash{{\SetFigFont{10}{12.0}{\rmdefault}{\mddefault}{\updefault}$\Bz(\delta)$}}}}
\put(1954.500,2576.500){\arc{4275.658}{1.1935}{1.9480}}
\blacken\path(1290.040,576.310)(1167.000,589.000)(1269.593,519.901)(1290.040,576.310)
\path(642,1189)(1092,814)
\blacken\path(980.608,867.775)(1092.000,814.000)(1019.019,913.869)(980.608,867.775)
\blacken\path(569.019,1089.131)(642.000,1189.000)(530.608,1135.225)(569.019,1089.131)
\path(642,1189)(192,814)
\path(642,214)(192,589)
\blacken\path(303.392,535.225)(192.000,589.000)(264.981,489.131)(303.392,535.225)
\blacken\path(714.981,313.869)(642.000,214.000)(753.392,267.775)(714.981,313.869)
\path(642,214)(1092,589)
\path(42,1564)(42,814)
\blacken\path(12.000,934.000)(42.000,814.000)(72.000,934.000)(12.000,934.000)
\put(642,1264){\makebox(0,0)[b]{\smash{{\SetFigFont{10}{12.0}{\rmdefault}{\mddefault}{\updefault}$\beta$}}}}
\put(642,64){\makebox(0,0)[b]{\smash{{\SetFigFont{10}{12.0}{\rmdefault}{\mddefault}{\updefault}$\alpha$}}}}
\put(1092,664){\makebox(0,0)[b]{\smash{{\SetFigFont{10}{12.0}{\rmdefault}{\mddefault}{\updefault}$\gamma$}}}}
\put(42,664){\makebox(0,0)[b]{\smash{{\SetFigFont{10}{12.0}{\rmdefault}{\mddefault}{\updefault}$\delta$}}}}
\put(2742,664){\makebox(0,0)[b]{\smash{{\SetFigFont{10}{12.0}{\rmdefault}{\mddefault}{\updefault}$\phi$}}}}
\put(1954.500,-1173.500){\arc{4275.658}{4.3351}{5.0896}}
\blacken\path(2618.960,826.690)(2742.000,814.000)(2639.407,883.099)(2618.960,826.690)
\end{picture}
}
\caption{Case (b): $B'_2=\{\delta\}$}\label{FF12}
\end{figure}

\section{The equational semantics LB and its connection with CF2}
We now define the family of LB semantics and identify the loop-busting counterpart of CF2.  We need to develop some concepts first.  We begin with a high school example.

\begin{example}[High school example]\label{FE8}
\begin{enumerate}
\item Solve the following equations in the unknowns $x,y,z$.
\begin{enumerate}
\item $x-y=1$
\item $x+y=5$
\item $z^2-4yz +x+1=0$
\end{enumerate}
The point I want to make is that we solve the equations directionally. We first find the values of $x$ and $y$ from equations (a) and (b) to be $x=3$ and $y=1$ and then substitute in equation (c) and solve it.  We get
\begin{enumerate}
\setcounter{enumii}{2}
\item $z^2-4z+4=0$\\
$z=2$
\end{enumerate}
\item Let us change the problem a bit.  We have the equations
\begin{enumerate}
\item $x-\sin y = 2.99$
\item $x+\sin y = 3.01$
\item $z^2-400 yz +x+1=0$ 
\end{enumerate}
Here we may again consider equations (a) and (b) first but also use the approximation $y\approx \sin y$. We find $x=3, y\approx 0.01$ and solve the third to get $z=2$.
\item A third possibility is to look at equations (a) and (b) and decide to ignore them altogether,\footnote{Of course, ignoring (a) and (b) needs to be justified.} and substitute $x=y=0$. We get 
\begin{enumerate}
\item [(c$^*$).] $z^2+1=0$
\end{enumerate}
\item Another example is the equation 
\[
x^4-2x^2+\frac{x}{1000}+1=0.
\]
To solve this equation we decide on the perturbation which ignores $\frac{x}{1000}$ on account of it being relatively small. We solve 
\[
x^4-2x^2+1=0
\]
we get $x =\pm 1$.
\end{enumerate}
\end{example}

\begin{remark}\label{RFeb06-1}
We present a perturbation protocol for solving equations of the form 
\[
x =\Bh_x (v_1,\ldots).
\]
\begin{enumerate}
\item Let $V$ be a set of variables and $\BBE$ be a set of equations of the form 
$x =\Bh_x(V_x)$, where $V_x\subseteq V$ are the variables appearing in $\Bh_x$, and 
$x$ ranges over $V$.  We seek solutions to the system $\BBE$ with values hopefully in 
$\{0,1\}$. If $\Bh_x$ are all continuous functions in $[0,1]$, then we know that  there 
are solutions with values in $[0,1]$, but are there solutions with values in $\{0,1\}$?

Even if we are looking for and happy with any kind of solution, we may wish to shorten the computation by starting with some good guesses, or some approximation or follow any kind of protocol $\BBP$ which will enable us to perturb the equations and get some results which we would find satisfactory from the point of view of our application area. 

In the case of equations arising from argumentation networks, we would like perturbations which help us overcome odd-numbered loops.

Note that in numerical analysis such equations are well known. If $x_1\comma x_m$ are variables in $[0,1]$ and $\Bh_1\comma \Bh_m$ are continuous functions in $[0,1]$, we want to solve the equations 
\[
x_i=\Bh_i(x_1\comma x_m), i=1\comma m.
\]
One well known method is that of successive approximations. We guess a starting value 
\[
x_1=a^0_1\comma x_m=a^0_m
\]
and continue by substituting
\[
a^{j+1}_i=\Bh_i(a^j_1\comma a^j_m).
\]
Under certain conditions on the functions $\Bh_i$ (Lipschitz condition), the values $a^j_i, =1,2,\ldots$ converge to a limit $a^\infty_i, i=1,2\comma m$ and that would be a solution.  What we are going to do in this paper is in the same spirit.

\item Let us proceed formally adopting a purely equational point of view and take a subset $B_1\subseteq V$ of the variables and decide for our own reasons to substitute the value $0$ for all the variables in $B_1$ in the equations $\BBE$.

How we choose $B_1$ is not said here, we assume that we have some protocols for finding such a $B_1$.  In the application area of argumentation, these protocols will be different loop-busting protocols LB$(\CM)$.

For the moment, formally from the equational point of view, we have a set of equations $\BBE$ with variables $V$ and a $B_1\subseteq V$, which we want to make 0.  How do we proceed?

This has to be done carefully and so we replace for each $u \in B_1$, the equation 
\[
\Be{\bf q}(u):u =\Bh_u(V_u)
\]
by the pair of equations 
\[
{\bf eq}^*(u): \left\{
\begin{array}{l}
u =\Bh_u(V_u) Z(u)\\
Z(u)=0
\end{array}
\right.
\]
We now propagate these values through the new set of equations, solve what we can solve and end up with new equations of the form 
\[
{\bf eq}^1(x): x=\Bh^1_x(V^1_x)
\]
for $x \in V$, where $\Bh^1_x$ is the new equation for $x$ and $V^1_x$ are its variables. We have 
\[
V^1_x\subseteq V-B_1.
\]
The variables of $B_1$ get all value 0 and maybe more variables solve to some numerical values.  Note that we can allow also for the case of $B_1=\varnothing$.

We always have a solution because the functions involved are all continuous.

Let $U_1=\{x|V^1_x=\varnothing\}$. $U_1$ is the set of $x$ which get a definite numerical value, for which $V_x$, the set of variables they depend on, is empty. We have $B_1\subseteq U_1\subseteq V$.

Let $\Bf_1$ be a function collecting these values on $U_1$, i.e.\ $\Bf(x)=\Bh^1_x$, for $x \in U_1$.

\item We refer to $U_1$ as the set of all elements instantiated to numerical values at step 1. We declare all  variables of $U_1$ as having rank 1.

\item Let $\BBE^1$ be the system of equations for the variables in $V-U_1$.

We now have a new system of variables and we can repeat the procedure by using a new set $B_2$ chosen to make $0$.

We can carry this procedure repeatedly until we get numerical values for all variables. Say that at step $n$ we have that the union of all sets $U_1, U_2\comma  U_n$  equals $V$. Then also each element of $V$ has a clear rank $k$, the step at which $x$ was instantiated. Call this procedure Protocol $\BBP=(B_1,B_2,\ldots)$.  Note that we did not say why and how we choose the sets $B_1,B_2,\ldots$. In the case of equations arising from argumentation networks, these sets $B_i$ will be loop-busting sets.
\item Note that the equations initially give variables either 0 or 1 and our loop busters also give variables $o$, and $Eq_{\max }$ and $Eq_{\rm inverse}$ are such that they keep the variables in $\{0,1\}$, then all the functions \Bf\ involved are $\{0,1\}$ functions
\end{enumerate}
\end{remark}

\begin{example}\label{EFeb07-1}
We now explain why we use $Z$ in our perturbation. Consider the equations
\begin{enumerate}
\item $a=1-c$
\item $b=1-a$
\item $c=1-b$.
\end{enumerate}
These equations correspond to a 3-element argumentation loop.  

We take the $B_1=\{a\}$ and want to execute a perturbation. If we do  just substitute $a=0$, we get a contradiction because the equations prove algebraically through manipulation that 
\[
a=b=c=1-a=1-b=1-c
\]
So we need to change the equation governing $a$.  We write
\begin{itemlist}{xxxx}
\item [$1^*a$.] $a=(1-c)Z(a)$
\item [$1^*b$.] $Z(a)=0$
\end{itemlist}
Algebraically we now have 4 equations in 4 variables
\[
a,b,c,Z(a)
\]
The solution is 
\[
Z(a)=0, a=0, b=1, c=0.
\]
We cannot any more execute an algebraic manipulation to get $a=b=c$!
\end{example}

\begin{example}\label{EFeb07-2}
Let us recall Example \ref{FE1}, manipulating the equations arising from  Figure \ref{FF2}.  This is an illustration of our procedure. We used the loop-busting sets $B_\alpha =\{\alpha\}, B_\beta=\{\beta\}$ and $B_\phi=\{\phi\}$, and followed the procedure as described in Remark \ref{RFeb06-1}.
\end{example}

Let us now proceed with more concepts leading the way to the full definition of our loop-busting LB semantics.  

We saw how to get a set of equations $Eq(\CA)$ from any argumentation  network 
$\CA$. Now we want to show how to get an argumentation  network $\CB_{\BBE}$ 
from any set of equations $\BBE$.

Furthermore, once we have a set of equations $\CB_{\BBE}$, we can perturb it to get a new set of equations $\BBE_B$ using some perturbation set $B$ and then from the equations $\BBE_B$ get a new argumentation network $\CA_{\BBE_B}$. The net result of all these steps is that we start with a network $\CB=(S,R)$ and a perturbation set of nodes $B\subseteq S$ and we end up with a new network which we can denote by $\CA=\CB_B$. If $B$ is a loop-busting set, then $\CA$ is the loop-busted result of applying $B$ to $\CB$.

\begin{definition}\label{DFeb07-3}
\begin{enumerate}
\item Let $V$ be a set of variables and let $x=\Bh_x(V), x\in V, V_x\subseteq V$ be a system of equations $\BBE$, where $V_x$ is the set of variables actually appearing in $\Bh_x$. We now define the associated argumentation network $\CA_{\BBE} =(S_{\BBE}, R_{\BBE})$ as follows:
\begin{enumerate}
\item Let $S_{\BBE}=V$
\item Let $yR_{\BBE}x$ hold iff $y \in V_x$.\footnote{The definition of $yRx$ as $y\in V_x$ is a very special definition, making essential use of the fact that the equation 
\[
x=\Bh_x(V_x)
\]
is of a very special form of either 
\[
x =1-\max V_x
\]
or
\[
x=\prod_{y\in V_x} (1-y)
\]
The real definition, which is more general, should be 
\[
yRx\mbox{ iff when we substitute } y=1\mbox{ in $\Bh_x$, we get that }\Bh_x=0.
\]
This definition is good in a more general context.

Suppose $y_1,y_2$ attack $x$ {\em jointly}. This means that $x=0$ only if both $y_1=y_2=1$. See \cite{459-9} for a discussion of joint attacks.

The equation for that is 
\[
x =1-y_1y_2
\]
Given a general equation 
\[
x =\Bh_x(V_x)
\]
for example 
\[
x=\Bh_x(y_1,y_2,z) =(1-z)(1-y_1,y_2)
\]
We define the notion for $V^0_x\subseteq V_x$ of joint attack as follows.

$V^0_x$ attack $x$ jointly if the substitution of $u=1$ for all variables in $V^0_x$ makes $\Bh_x=0$ and for no proper subset of $V^0_x$ do we have this property.

So in the above example, $y_1,y_2$ attack $x$ jointly and $z$ attacks $x$ singly.
}
\end{enumerate}
\item Let $\CB=(S,R)$ be a network and let $\BBE=Eq(\CB)$ be its system of equations.

$\BBE$ is a system of equations as in (a) above.  Let $B\subseteq V$ be some of the variables in $V$.  Let \Bf\ be a function giving numerical values 0 to the variables in $B$.

Let $\BBE_{\Bf}$ be the system of equations obtained from $\BBE$ by substituting the values $\Bf(u)$ in the equations for the variables of $B$.  The variables of $\BBE_{\Bf}$ are $V-B$. Consider now the argumentation network
\[
\CA_{\BBE_{\Bf}} =(S_{\BBE_{\Bf}}, R_{\BBE_{\Bf}}).
\]
We say that $\CA_{\BBE_{\Bf}}$ was derived from $\CB$ using \Bf.  We can also use the notation $\CB_{\Bf}$ or $\CB_B$.
\end{enumerate}
\end{definition}

\begin{example}\label{EFeb07-4}
Let us use the network of Figure \ref{FFeb07-5} to illustrate the process outlined in Remark \ref{RFeb06-1}.  This figure is used extensively in \cite{459-2} and also quoted in \cite{459-3}.

\begin{figure}
\centering
\setlength{\unitlength}{0.00083333in}
\begingroup\makeatletter\ifx\SetFigFont\undefined%
\gdef\SetFigFont#1#2#3#4#5{%
  \reset@font\fontsize{#1}{#2pt}%
  \fontfamily{#3}\fontseries{#4}\fontshape{#5}%
  \selectfont}%
\fi\endgroup%
{\renewcommand{\dashlinestretch}{30}
\begin{picture}(3417,1480)(0,-10)
\put(3402,1025){\makebox(0,0)[b]{\smash{{\SetFigFont{10}{12.0}{\rmdefault}{\mddefault}{\updefault}$h$}}}}
\path(117,1255)(642,805)
\blacken\path(531.365,860.317)(642.000,805.000)(570.413,905.873)(531.365,860.317)
\path(592,615)(117,205)
\blacken\path(188.238,306.120)(117.000,205.000)(227.443,260.699)(188.238,306.120)
\path(642,805)(1242,1255)
\blacken\path(1164.000,1159.000)(1242.000,1255.000)(1128.000,1207.000)(1164.000,1159.000)
\path(1242,1255)(2142,205)
\blacken\path(2041.127,276.587)(2142.000,205.000)(2086.683,315.635)(2041.127,276.587)
\path(692,610)(1242,205)
\blacken\path(1127.583,251.997)(1242.000,205.000)(1163.160,300.311)(1127.583,251.997)
\path(1392,130)(1992,130)
\blacken\path(1872.000,100.000)(1992.000,130.000)(1872.000,160.000)(1872.000,100.000)
\path(1992,55)(1392,55)
\blacken\path(1512.000,85.000)(1392.000,55.000)(1512.000,25.000)(1512.000,85.000)
\path(2742,430)(2292,130)
\blacken\path(2375.205,221.526)(2292.000,130.000)(2408.487,171.603)(2375.205,221.526)
\path(3362,985)(2927,610)
\blacken\path(2998.301,711.075)(2927.000,610.000)(3037.477,665.630)(2998.301,711.075)
\path(2442,1255)(3257,1105)
\blacken\path(3133.552,1097.217)(3257.000,1105.000)(3144.412,1156.226)(3133.552,1097.217)
\path(2142,205)(2367,1255)
\blacken\path(2371.191,1131.378)(2367.000,1255.000)(2312.522,1143.950)(2371.191,1131.378)
\put(42,1330){\makebox(0,0)[b]{\smash{{\SetFigFont{10}{12.0}{\rmdefault}{\mddefault}{\updefault}$a$}}}}
\put(42,55){\makebox(0,0)[b]{\smash{{\SetFigFont{10}{12.0}{\rmdefault}{\mddefault}{\updefault}$c$}}}}
\put(642,655){\makebox(0,0)[b]{\smash{{\SetFigFont{10}{12.0}{\rmdefault}{\mddefault}{\updefault}$b$}}}}
\put(1242,1330){\makebox(0,0)[b]{\smash{{\SetFigFont{10}{12.0}{\rmdefault}{\mddefault}{\updefault}$d$}}}}
\put(1242,55){\makebox(0,0)[b]{\smash{{\SetFigFont{10}{12.0}{\rmdefault}{\mddefault}{\updefault}$e$}}}}
\put(2142,55){\makebox(0,0)[b]{\smash{{\SetFigFont{10}{12.0}{\rmdefault}{\mddefault}{\updefault}$f$}}}}
\put(2442,1330){\makebox(0,0)[b]{\smash{{\SetFigFont{10}{12.0}{\rmdefault}{\mddefault}{\updefault}$g$}}}}
\put(2832,460){\makebox(0,0)[b]{\smash{{\SetFigFont{10}{12.0}{\rmdefault}{\mddefault}{\updefault}$i$}}}}
\path(42,205)(42,1255)
\blacken\path(72.000,1135.000)(42.000,1255.000)(12.000,1135.000)(72.000,1135.000)
\end{picture}
}
\caption{}\label{FFeb07-5}
\end{figure}

The variables of this figure are 
\[
V=\{a,b,c,d,e,f,g,h,i\}
\]
The equations are, using $Eq_{\rm inverse}$ as follows:

\begin{enumerate}
\item $a =1-c$
\item $b=1-a$
\item $c=1-b$
\item $d=1-b$
\item $e=(1-b)(1-f)$
\item $f=(1-e)(1-d)(1-i)$
\item $g=(1-f)$
\item $h=(1-g)$
\item $i=(1-h)$ 
\end{enumerate}
Let us take $B =\{a\}$ and let \Bf\ be the function making $a=0$ (i.e.\ $\Bf(a)=0$).  (This is a loop-busting move, breaking the loop $\{a,b,c\}$).

The new equations for $a$ are
\begin{itemlist}{xxxx}
\item [$1^*a$.] $a =(1-c)Z(a)$
\item [$1^*b$.] $Z(a) =0$
\end{itemlist}
or we can simply write 
\begin{itemlist}{xxxx}
\item [$1^*$.] $a=0$
\end{itemlist}
Substituting this value in the equations and solving we get the new system of equations for the unknown variable as follows
\begin{enumerate}
\item $b=1$, known value
\item $c=0$, known value
\item $d=0$, known value
\item $f=1-i$
\item $g=1-f$
\item $h=1-g$
\item $i=1-h$
\end{enumerate}
We get the solution function $\Bf_1$ giving the known values to the variables $a=0, b=1, c=0, d=0, e=0$ (these are the variables of rank 2) and the new system of equations (6), (7), (8), (9).  Using item (1) of Definition \ref{DFeb07-3}, we get the derived network in Figure \ref{FFeb07-6}.

\begin{figure}
\centering
\setlength{\unitlength}{0.00083333in}
\begingroup\makeatletter\ifx\SetFigFont\undefined%
\gdef\SetFigFont#1#2#3#4#5{%
  \reset@font\fontsize{#1}{#2pt}%
  \fontfamily{#3}\fontseries{#4}\fontshape{#5}%
  \selectfont}%
\fi\endgroup%
{\renewcommand{\dashlinestretch}{30}
\begin{picture}(1557,1405)(0,-10)
\put(15,665){\makebox(0,0)[b]{\smash{{\SetFigFont{10}{12.0}{\rmdefault}{\mddefault}{\updefault}$f$}}}}
\path(940,1300)(1540,840)
\blacken\path(1426.514,889.204)(1540.000,840.000)(1463.020,936.820)(1426.514,889.204)
\path(1545,570)(955,100)
\blacken\path(1030.167,198.234)(955.000,100.000)(1067.552,151.304)(1030.167,198.234)
\path(600,125)(30,590)
\blacken\path(141.948,537.391)(30.000,590.000)(104.020,490.899)(141.948,537.391)
\put(790,1255){\makebox(0,0)[b]{\smash{{\SetFigFont{10}{12.0}{\rmdefault}{\mddefault}{\updefault}$g$}}}}
\put(800,55){\makebox(0,0)[b]{\smash{{\SetFigFont{10}{12.0}{\rmdefault}{\mddefault}{\updefault}$i$}}}}
\put(1625,660){\makebox(0,0)[b]{\smash{{\SetFigFont{10}{12.0}{\rmdefault}{\mddefault}{\updefault}$h$}}}}
\path(35,850)(640,1315)
\blacken\path(563.138,1218.087)(640.000,1315.000)(526.574,1265.659)(563.138,1218.087)
\end{picture}
}
\caption{}\label{FFeb07-6}
\end{figure}

We can continue now with this loop and choose a loop-busting variable say $B'=\{i\}$. We substitute $i=0$ in the equations and get $f=1, g=0, h=1, i=0$ (these are the variables of rank 1).  We extend the function $\Bf_1$ to be $\Bf_2$ giving these values.

We thus get the extension $\{b, f, h\}$ (these are the variables which get value 1 from $\Bf_2$).  We also get clear ranks for the variables for the particular protocol  $\BBP = ( B,B' ) =( \{a\}, \{i\} )$.
\end{example}

\begin{example}\label{EFeb08-1}
Let us do another example. We use Figure \ref{FF1} item b.  We note that in Example \ref{FE2}, item (e), we make $\phi =0$.  This means we start with $B_1=\{\phi\}$.

We manipulated the equations in item (e) of Example \ref{FE2} and got the remaining equation (1)--(4), namely
\begin{enumerate}
\item $\alpha =1-\gamma$
\item $\delta =1-\alpha$
\item $\beta =1-\delta$
\item $\gamma =1-\beta$
\end{enumerate}
We proceeded in item (e) of Example \ref{FE2} to find two solutions to these equations. However, if we follow the procedures of Remark \ref{RFeb06-1}, we need now to extract a new network out of equations (1)--(4), and keep in mind the partial function $\Bf_1(\phi)=0$.  The new network is presented in Figure \ref{FFeb08-2}.

\begin{figure}
\centering
\setlength{\unitlength}{0.00083333in}
\begingroup\makeatletter\ifx\SetFigFont\undefined%
\gdef\SetFigFont#1#2#3#4#5{%
  \reset@font\fontsize{#1}{#2pt}%
  \fontfamily{#3}\fontseries{#4}\fontshape{#5}%
  \selectfont}%
\fi\endgroup%
{\renewcommand{\dashlinestretch}{30}
\begin{picture}(1557,1405)(0,-10)
\put(15,665){\makebox(0,0)[b]{\smash{{\SetFigFont{10}{12.0}{\rmdefault}{\mddefault}{\updefault}$\delta$}}}}
\path(940,1300)(1540,840)
\blacken\path(1426.514,889.204)(1540.000,840.000)(1463.020,936.820)(1426.514,889.204)
\path(1545,570)(955,100)
\blacken\path(1030.167,198.234)(955.000,100.000)(1067.552,151.304)(1030.167,198.234)
\path(600,125)(30,590)
\blacken\path(141.948,537.391)(30.000,590.000)(104.020,490.899)(141.948,537.391)
\put(790,1255){\makebox(0,0)[b]{\smash{{\SetFigFont{10}{12.0}{\rmdefault}{\mddefault}{\updefault}$\beta$}}}}
\put(800,55){\makebox(0,0)[b]{\smash{{\SetFigFont{10}{12.0}{\rmdefault}{\mddefault}{\updefault}$\alpha$}}}}
\put(1525,660){\makebox(0,0)[b]{\smash{{\SetFigFont{10}{12.0}{\rmdefault}{\mddefault}{\updefault}$\gamma$}}}}
\path(35,850)(640,1315)
\blacken\path(563.138,1218.087)(640.000,1315.000)(526.574,1265.659)(563.138,1218.087)
\end{picture}
}
\caption{}\label{FFeb08-2}
\end{figure}

We can now proceed by choosing a new loop-busting set $B_2$. We have four options here. Choosing $B_2$ to be $\alpha$ or $\beta$ will make the extension $\alpha=\beta=0, \delta=\gamma=1$ and choosing $B_2$ to the $\gamma$ or $\delta$ will give the extension $\alpha =\beta=1, \gamma=\delta=0$.

These are also the solutions we got in item (e) of Definition \ref{FE2}.
\end{example}

We now want to define procedures which will give us the CF2 extensions of Baroni {\em et al.}.  We need to give a protocol of which $B$ to use, as outlined in Definition \ref{DFeb07-3}.

\begin{definition}\label{DFeb07-7}
\begin{enumerate}
\item Let $\CA=(S,R)$ be an argumentation network.  Define $x\approx y$ iff $x=y$ or for some loop $u_1Ru_2, u_2Ru_3\comma u_nRu_1$ we have $x,y\in \{ u_1\comma u_n\}$. This is an equivalence relation on $S$. Let $S^*$ be a set of equivalence classes.\footnote{These are the maximal strongly connected sets in the terminology of \cite{459-1}.} Define $R^*$ on $S^*$ by $x^*R^*y^*$ iff for some $x'\approx x, y'\approx y$ we have $x'Ry'$.

$(S^*, R^*)$ is an ordering without loops.
\item Let $S_x\subseteq S$ be an $\approx$ equivalence class of $x\in S$. We say that $S_x$ is top-class if the following holds
\begin{itemize}
\item $yRz\wedge z\approx x\to y\approx x$. 
\end{itemize}
The above means that no other disjoint class $S_y$ attacks any member of $S_x$.

Let $C\subseteq S_x$ be a maximal subset of conflict free points in $S_x$. Let $B =S_x-C$. Then using $B$ in the protocol of Remark \ref{RFeb06-1} shall be referred to as {\em using the CF2 protocol}.
\end{enumerate}
\end{definition}

\begin{lemma}\label{LFeb07-8}
In the notation of Definition \ref{DFeb07-7}, if we apply the protocol of Remark \ref{RFeb06-1}, we get that all elements of $B$ solve to 0 and all elements of $C$ solve to 1.
\end{lemma}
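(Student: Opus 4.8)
The plan is to verify the two assertions separately, reading each off directly from the shape of the $Eq_{\max}$ equations once the perturbation has been installed. Recall that under $Eq_{\max}$ each node $x$ carries the equation $x = 1 - \max(y_1 \comma y_{k(x)})$, where $y_1 \comma y_{k(x)}$ are the attackers of $x$, and $x = 1$ when $x$ has no attackers. For each $b \in B$ the protocol of Remark \ref{RFeb06-1} (equivalently Definition \ref{FD2}) replaces the equation of $b$ by $b = \Bh_b(V_b) Z(b)$ together with $Z(b) = 0$; substituting $Z(b) = 0$ forces $b = \Bh_b(V_b)\cdot 0 = 0$ irrespective of the values of its attackers. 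This already settles the first half: every element of $B$ solves to $0$.

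For the $C$-part I would argue that every attacker of a point $c \in C$ is an element of $B$, and hence has value $0$. Two facts deliver this. Since $S_x$ is a top-class in the sense of Definition \ref{DFeb07-7}, the defining implication $yRz \wedge z \approx x \to y \approx x$, applied to $z = c$ (note $c \approx x$), shows that every attacker $y$ of $c$ satisfies $y \approx x$, i.e. $y \in S_x = C \cup B$. Since $C$ is conflict-free and $c \in C$, no element of $C$ attacks $c$; therefore all attackers of $c$ lie in $B$. Feeding the already-established value $0$ for each such attacker into the equation for $c$ gives $c = 1 - \max(0 \comma 0) = 1$, while if $c$ happens to have no attackers at all the no-attacker clause gives $c = 1$ directly. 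Either way every element of $C$ solves to $1$.

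It then remains to observe that these assignments really are the output of the protocol, not merely a consistent guess. The key point is that after the $B$-variables are fixed to $0$ there are no residual loops among the undetermined variables of $S_x$: conflict-freeness of $C$ means there are no $R$-edges internal to $C$, so each $c \in C$ depends only on $B$-variables and is instantiated to a definite value in a single propagation step. Hence the protocol assigns $0$ to all of $B$ and $1$ to all of $C$, and no freedom is left to produce any other solution within $S_x$.

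The only delicate points, and the places I would slow down, are the attacker case-analysis for $c$ and the boundary case of a $C$-point with no attackers; both hinge on using the top-class hypothesis to keep all attackers inside $S_x$ and on the exact reading of \emph{conflict-free} as forbidding $R$-edges in both directions among $C$-points, so that $c$ genuinely has no attacker in $C$. If one wants the statement for a class that is only top after higher classes have already been resolved, the same analysis applies by induction along the loop-free ordering $(S^*,R^*)$, treating the already-fixed $\{0,1\}$ values of the higher classes as constants feeding into the relevant $\max$.
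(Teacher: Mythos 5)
Your proof is correct and takes essentially the same route as the paper's: elements of $B$ are forced to $0$ by the $Z(u)=0$ perturbation, and the top-class condition together with conflict-freeness of $C$ places every attacker of a point of $C$ inside $B$, so the defining property of $\Bh_x$ (value $1$ when all attackers are $0$) yields $1$ on all of $C$, including the unattacked case. Your observation that plain conflict-freeness of $C$ (rather than maximality) suffices for this lemma is accurate; maximality is what the paper's footnote flags as essential, but it is really needed only later, to make the resulting extension a CF2 extension in the LB2--CF2 correspondence.
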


\begin{proof}
Let $x$ be an element of $C$. The equation for $x$ is 
\[
x =\Bh_x(V_x)
\]
where $V_x=\{y|y \mbox{ attacks } x\}$.

$\Bh_x$ can be the expression $1-\max \{y|y\in V_x\}$ according to $Eq_{\max}$ or $\prod_{y\in V_x}(1-y)$ for resp. $Eq_{\rm inverse}$ or any other choice as long as the following condition holds.

\begin{itemize}
\item If for all $y \in V_x, y=0$ then $\Bh_x(V_x)=1$.
\end{itemize}

The important point to note is that since $S_x$ is a top loop, all attackers of $x$ are in $S_x$, and since $x\in C$ and $C$ is a {\em maximal} conflict free set, all attackers of $x$ are not in $C$, so they are in $B$ and so they are 0.  Hence $x=1$.\footnote{Obviously our proof does not work unless $C$ is maximal conflict free set.  So the LB semantics can yield the CF2 semantics only because Baroni {\em et al.} chose to take {\em maximal} conflict free set.  I shall ask Baroni for his reasons for this choice.}

The above consideration shows that $B$ can represent $C$, and so Baroni {\em et al.} conflict free choice $C$ can be represented as our equational loop-busting set $B$.  We now continue our protocol with $B$ and get a new network $\CA_B$ (note we are using the notation of Definition \ref{DFeb07-7} and our original network was $\CA =(S,R)$.)
\end{proof}

\begin{remark}\label{RFeb07-9}
Let $\CA=(S,R)$ be a network and suppose it does contain points that are not attacked. If we choose $B=\varnothing$ and apply our procedure of Remark \ref{RFeb06-1}, what do we get? 

The procedure solves the equations as much as possible to get a new network $\CA_1$ and a function $\Bf_1$ giving numerical values nodes which are in or out in the grounded extension. In other words, they have a $\{0,1\}$ numerical value. Thus what we get is the rest of the network after the grounded extension has been eliminated.
\end{remark}

\begin{example}\label{EFeb07-10}
Consider the network of Figure \ref{FFeb07-11}. Begin the procedure with $B_1=\varnothing$. We get $\Bf_1(a)=1, \Bf_1(b)=0$ and the remaining network $\CA_1$ of Figure \ref{FFeb07-12}.

\begin{figure}
\centering
\setlength{\unitlength}{0.00083333in}
\begingroup\makeatletter\ifx\SetFigFont\undefined%
\gdef\SetFigFont#1#2#3#4#5{%
  \reset@font\fontsize{#1}{#2pt}%
  \fontfamily{#3}\fontseries{#4}\fontshape{#5}%
  \selectfont}%
\fi\endgroup%
{\renewcommand{\dashlinestretch}{30}
\begin{picture}(1007,1383)(0,-10)
\put(790,655){\makebox(0,0)[b]{\smash{{\SetFigFont{10}{12.0}{\rmdefault}{\mddefault}{\updefault}$d$}}}}
\path(320,605)(560,200)
\blacken\path(473.015,287.941)(560.000,200.000)(524.632,318.529)(473.015,287.941)
\path(780,610)(600,195)
\blacken\path(620.227,317.028)(600.000,195.000)(675.273,293.153)(620.227,317.028)
\path(765,825)(765,828)(764,834)
	(763,844)(762,856)(761,872)
	(760,888)(760,905)(760,924)
	(762,943)(765,964)(770,985)
	(776,1004)(782,1020)(787,1030)
	(790,1037)(791,1042)(793,1045)
	(794,1047)(797,1050)(801,1054)
	(809,1060)(820,1068)(835,1075)
	(851,1080)(866,1082)(878,1084)
	(887,1084)(894,1083)(900,1082)
	(906,1081)(913,1080)(922,1077)
	(933,1073)(946,1068)(960,1060)
	(974,1049)(983,1039)(988,1031)
	(991,1025)(992,1021)(993,1017)
	(993,1011)(994,1001)(995,987)
	(995,970)(993,956)(990,943)
	(987,935)(986,929)(984,925)
	(983,923)(981,921)(979,917)
	(975,912)(968,904)(958,893)
	(945,880)(927,865)(910,853)
	(893,844)(877,836)(862,829)
	(848,823)(825,815)
\blacken\path(928.484,882.757)(825.000,815.000)(948.195,826.088)(928.484,882.757)
\put(295,645){\makebox(0,0)[b]{\smash{{\SetFigFont{10}{12.0}{\rmdefault}{\mddefault}{\updefault}$b$}}}}
\put(595,55){\makebox(0,0)[b]{\smash{{\SetFigFont{10}{12.0}{\rmdefault}{\mddefault}{\updefault}$c$}}}}
\put(15,1245){\makebox(0,0)[b]{\smash{{\SetFigFont{10}{12.0}{\rmdefault}{\mddefault}{\updefault}$a$}}}}
\path(30,1215)(260,805)
\blacken\path(175.126,894.980)(260.000,805.000)(227.454,924.335)(175.126,894.980)
\end{picture}
}
\caption{}\label{FFeb07-11}
\end{figure}

\begin{figure}
\centering
\setlength{\unitlength}{0.00083333in}
\begingroup\makeatletter\ifx\SetFigFont\undefined%
\gdef\SetFigFont#1#2#3#4#5{%
  \reset@font\fontsize{#1}{#2pt}%
  \fontfamily{#3}\fontseries{#4}\fontshape{#5}%
  \selectfont}%
\fi\endgroup%
{\renewcommand{\dashlinestretch}{30}
\begin{picture}(427,1111)(0,-10)
\path(200,610)(20,195)
\blacken\path(40.227,317.028)(20.000,195.000)(95.273,293.153)(40.227,317.028)
\path(185,825)(185,828)(184,834)
	(183,844)(182,856)(181,872)
	(180,888)(180,905)(180,924)
	(182,943)(185,964)(190,985)
	(196,1004)(202,1020)(207,1030)
	(210,1037)(211,1042)(213,1045)
	(214,1047)(217,1050)(221,1054)
	(229,1060)(240,1068)(255,1075)
	(271,1080)(286,1082)(298,1084)
	(307,1084)(314,1083)(320,1082)
	(326,1081)(333,1080)(342,1077)
	(353,1073)(366,1068)(380,1060)
	(394,1049)(403,1039)(408,1031)
	(411,1025)(412,1021)(413,1017)
	(413,1011)(414,1001)(415,987)
	(415,970)(413,956)(410,943)
	(407,935)(406,929)(404,925)
	(403,923)(401,921)(399,917)
	(395,912)(388,904)(378,893)
	(365,880)(347,865)(330,853)
	(313,844)(297,836)(282,829)
	(268,823)(245,815)
\blacken\path(348.484,882.757)(245.000,815.000)(368.195,826.088)(348.484,882.757)
\put(15,55){\makebox(0,0)[b]{\smash{{\SetFigFont{10}{12.0}{\rmdefault}{\mddefault}{\updefault}$c$}}}}
\put(210,655){\makebox(0,0)[b]{\smash{{\SetFigFont{10}{12.0}{\rmdefault}{\mddefault}{\updefault}$d$}}}}
\end{picture}
}
\caption{}\label{FFeb07-12}
\end{figure}

We can now continue with the loop-busting set $B_1=\{d\}$, follow the procedure for $\CA_1$ and get $d=0, c=1$.  Thus the solution with the loop buster sets $B_1=\varnothing, B_2=\{d\}$ yields the function $\Bf_2(a)=1, \Bf_2(b)=0, \Bf_2(d)=0, \Bf_2(c)=1$.

The corresponding extension is $\{a,c\}$.
\end{example}

\begin{remark}\label{RFeb07-13}
\begin{enumerate}
\item We are now in a position to define our LB$\CM$ extensions. We see from the discussions so far that we can use the following procedure to find $\{0,1\}$ functions \Bf\ on an argumentation network $\CA_0=(S_0,R_0)$.

{\bf Step 1.}\\
Define $\CA^*_0 =( S^*_0, R^*_0 )$ as in Definition \ref{DFeb07-7}.  Choose a loop-busting subset $B_1$ for the top loops of $\CA^*_0$, following some meta-level considerations, i.e.\ satisfying $\CM$.

{\bf Step 2.}\\
Apply the procedure of Remark  \ref{RFeb06-1}, using $B_1$ and $\CA_0$ and get $\Bf_1$ and  $\CA_1$.

Recall that following the terminology of Remark \ref{RFeb06-1} item 3, all elements in $S_0 - S_1$ have rank 1. These are the elements instantiated to numerical values  at Step 1.
Furthermore $\Bf_1$ is a $\{0,1\}$ function. Also note that $\CA_1 = (S_1 , R_0 \cap S_1 )$. 

{\bf Step 3.}\\
Choose a new loop-busting set $B_2$ for the top loops of $\CA_1$ using our meta-level considerations $\CM$.

{\bf Step 4.}\\
Go to apply step 2 to $\CA_1$ using $B_2$ and obtain $\Bf_2$ and $\CA_2$.

Also identify the elements of $S_2 - S_1$  as the elements of rank 2.

{\bf Step $n+2$.}\\
Continue until you get $\CA_{n+3}=\varnothing$.

The function $\Bf_{n+2}$ will be total on $S_0$ and will give you the extension 
\[
E(B_1,B_2\comma B_{n+2})=\{x|\Bf_{n+2}(x)=1\}.
\]
All elements in the network have a clearly defined rank, it being the step in which they were instantiated to numerical value in $\{0,1\}$.

\item We define the semantics LB$\CM$ as the family of all the extensions of the form $E(B_1\comma B_{n+2})$, for all possible choices of $B_i$ allowable by $\CM$.
\end{enumerate}
\end{remark}

We now want to proceed to define our loop-busting semantics LB1, LB2, LB3, and LB4.  We want to use the notions of minimal absolute and minimal relative. We already remarked in item (2) of Remark \ref{FR1} that the above notions need to be adjusted. We now have the tools to do so.

\begin{definition}[Computational loop-busting set]\label{DFeb08-3}
\begin{enumerate}
\item Let $(S,R)$ be a complete loop, as defined in Definition \ref{FD1}.  Let $B_1$ be a subset of $S$. We say that $B_1$ is a computational loop-buster if, when we follow the procedure outlined in Remark \ref{RFeb06-1}, and get the function $\Bf_1$ as described there, we have that 
\[
B'_1 =\{x|\Bf_1 (x)=0\}
\]
is a loop-busting set (according to Definition \ref{FD1}), namely $B'_1$ intersects every loop cycle in $S$.

Note that what this definition says is very simple. There is no need  for $B_1$ itself to intersect every loop $C$. Making all points in $B_1$ equal 0 in the equations would make more points 0, namely we get $B'_1$ and $B'_1$ does intersect every loop.  If this is the case, we say that $B_1$ is a computational loop-buster.
\item A computational loop-buster is minimal absolute if there is no smaller computational loop buster with a smaller number of elements.  

$B_1$ is a minimal relative computational loop-buster, if no proper subset of it is such.
\end{enumerate}
\end{definition}

\begin{definition}\label{DFeb07-14}
We define the following loop-busting semantics.
\begin{enumerate}
\item LB1\\
We are allowed to choose only loop-busting sets which contain elements from top loops (as defined in Definition \ref{DFeb07-7}, item (1)), and which are computationally minimal absolute (as defined in Definition \ref{DFeb08-3}). 

So in other words, if our network is $(S,R)$ and $S_x, S_y,\ldots$ are all the top loops then our set $B$ is a subset of $S_x\cup S_y\cup\ldots$ and it is a computational minimal absolute set for any loop in any top loop.

\item LB2 (to be proved equivalent to the CF2 semantics)\\
We are allowed to choose only loop-busting sets obtained from maximal conflict free subsets of top loops, as defined in Definition \ref{DFeb07-7}, item (2).
\item LB3\\
We are allowed to choose only loop-busting sets containing elements from top loops (as defined in Definition \ref{DFeb07-7}, item (1)), and which are computational minimal relative (as defined in Definition \ref{DFeb08-3}).
\item LB4 (Directional Shkop semantics of \cite{459-3})\\
We are allowed to choose loop-busting sets containing elements from top loops (as defined in Definition \ref{DFeb07-7}, item (1)) and which are a single loop element.

So this semantics busts top loops one at a time.
\end{enumerate}
Note that the LB4 semantics gives rise to the same extensions as LB3. The reason is that if we start with a computationally minimal relative set $B$, we can substitute its elements one by one following the protocol of LB4 semantics and we never bust all loops until we substitute the last element , because the set is minimal relative. But now we are doing an equivalent LB4 semantics!
\end{definition}

%%%%%%%%%%insert 1 13.2.12
We now show that our LB2 semantics is the same as CF2.

The following is a definition of CF2 extensions, as given in \cite[Definition 4]{459-2}.  The notion of strongly connected set used was defined in Definition \ref{DFeb07-7}.

\begin{definition}\label{DFeb11-1}
Let $\CA =(S,R)$ be an argumentation network and let $E\subseteq S$ be a set of arguments, then $E$ is a CF2 extension of $\CA$ iff
\begin{enumerate}
\item If $(S,R)$ itself is a strongly connected set then $E$ is a maximal conflict free subset of $S$.
\item Otherwise, for every $C$ where $C$ is a maximal strongly connected subset of $S$, we have that the set $C\cap E$ is a CF2 extension of the network $(T^C_1,R_1)$, where
\[
\begin{array}{l}
T^C_1=C-\{x|\exists y \in E((y,x)\in R\wedge y \not \in C\}\\
R_1 = R\cap (T^C_1 \times T^C_1).
\end{array}\]
\end{enumerate}
\end{definition}

\begin{theorem}[$LB2 = CF2$]\label{TFeb11-1}
The semantics CF2 is the same as the semantics LB2.
\end{theorem}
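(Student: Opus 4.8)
My plan is to prove that the two families of extensions coincide by induction on $|S|$, exploiting the fact that both definitions are organised around the same strongly-connected-component decomposition. I work throughout with $Eq_{\max}$, whose loop-busted solutions are $\{0,1\}$-valued by Theorem \ref{FT1} and enjoy the key property that on $\{0,1\}$ inputs a node takes value $0$ as soon as one of its attackers is $1$ and value $1$ once all of its attackers are $0$. In the base case $(S,R)$ is a single strongly connected component; then Definition \ref{DFeb11-1}(1) makes the CF2 extensions precisely the maximal conflict-free subsets, while on the LB2 side the whole network is one top loop, so by Definition \ref{DFeb07-7}(2) every admissible loop-buster is $B=S-C$ with $C$ maximal conflict-free, and Lemma \ref{LFeb07-8} forces $\Bf_B(x)=1$ for all $x\in C$ and $\Bf_B(x)=0$ for all $x\in B$. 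Hence the LB2 extension is exactly $C$, and letting $C$ range over all maximal conflict-free sets gives equality of the two families.

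For the inductive step assume $(S,R)$ has at least two components and let $S_{x_1}\comma S_{x_m}$ be its top loops, i.e.\ the source classes of the condensation $(S^*,R^*)$ of Definition \ref{DFeb07-7}(1). On each such $S_{x_j}$ the two semantics make the same move: since a source class receives no outside attack we have $T^{S_{x_j}}_1=S_{x_j}$, so Definition \ref{DFeb11-1} reduces to case (1) and CF2 selects a maximal conflict-free subset $C_j\subseteq S_{x_j}$; LB2, following the CF2 protocol of Definition \ref{DFeb07-7}(2) and the scheme of Remark \ref{RFeb07-13}, selects the loop-buster $B_1=\bigcup_{j}(S_{x_j}-C_j)$. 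These top-level choices are thus in bijection. Fixing one of them, I run the propagation of Remark \ref{RFeb06-1}: it assigns $0$ to every node some attacker of which is already $1$ and $1$ to every node all of whose attackers are already $0$, producing a partial $\{0,1\}$-function $\Bf_1$ and a residual network $\CA_1=(S_1,R\cap(S_1\times S_1))$. Every top loop is non-empty and fully decided, so $|S_1|<|S|$; moreover any node surviving in $S_1$ must have all of its already-decided attackers equal to $0$ and at least one undecided attacker, so no survivor is defeated from outside.

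The heart of the proof, and the step I expect to be the main obstacle, is to show that this single equational cascade reproduces exactly CF2's nested reduction. Concretely I must verify that a node is assigned $0$ by $\Bf_1$ iff it is deleted from some $T^{C}_1$ in Definition \ref{DFeb11-1} (defeated by an accepted outside argument), that it is assigned $1$ iff the reduction strands it as an unattacked singleton and hence forces it into $E$, and that the undecided loops left in $\CA_1$ are precisely the reduced sub-networks that remain strongly connected and on which CF2 has not yet committed to a choice. The delicate mismatch is that Definition \ref{DFeb11-1} peels components one nesting level at a time whereas the propagation resolves the entire forced cascade in one sweep; I would reconcile the two by showing that each computes the same least fixed point of the operator ``remove every node attacked by an accepted argument, then accept every newly unattacked singleton,'' an iteration that stalls exactly at genuinely undecided loops. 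Granting this, the chosen top-loop assignment together with $\Bf_1$ is common to both semantics, and the residual network $\CA_1$ is identical for both. Applying the induction hypothesis to $\CA_1$ makes the remaining CF2 and LB2 decisions coincide, and recombining the top-loop choice, the forced cascade $\Bf_1$, and the inductively matched extensions of $\CA_1$ yields the equality of the two extension families on $(S,R)$, closing the induction.
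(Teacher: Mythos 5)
Your proposal is correct in substance but takes a genuinely different route from the paper's. The paper proves the two inclusions separately: for LB2 $\subseteq$ CF2 it inducts on the number $n+2$ of steps of the LB2 computation, using the \emph{rank} of each element (the step at which it receives its $\{0,1\}$ value) and verifying the per-SCC condition of Definition \ref{DFeb11-1} directly, splitting each $T^C_1$ into a part $T^C_{1,k}$ decided by the forced cascade at step $k$ and a part $T^C_{1,k+1}$ decided by a buster at step $k+1$ (with the case distinction on whether the undecided fragment $C'$ is still a top loop at step $k$); for CF2 $\subseteq$ LB2 it reads the busters off a given CF2 extension $E_0$, setting $B_{i+1}=\bigcup (C-E_0)$ over the top loops $C$ of $\CA_i$. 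You instead run a single induction on $|S|$, establishing at the top level the correspondence between maximal conflict-free choices $C_j$ and busters $S_{x_j}-C_j$ via Lemma \ref{LFeb07-8} --- which is exactly the paper's base case $n+2=2$ --- and then recursing on the common residual network; this buys you both inclusions simultaneously, and the paper's explicit converse construction is implicit in your top-level bijection. The price of your route is the decomposition lemma you have rightly flagged as the heart: Definition \ref{DFeb11-1} constrains \emph{every} maximal strongly connected subset of the original $S$ at once, relative to the whole of $E$, so you must prove that it commutes with peeling, i.e.\ that $E$ is CF2 for $(S,R)$ iff its trace on the top loops is maximal conflict-free there and its remainder is CF2 for the residual $\CA_1$ left by the cascade of Remark \ref{RFeb06-1}. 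Your least-fixed-point reconciliation of the one-sweep cascade with CF2's nested reduction is a workable way to discharge this (on acyclic debris both reduce to grounded-style propagation, and what survives undecided is exactly the not-yet-chosen loops); it is the same content the paper handles through its rank bookkeeping and the $T^C_1$ split, and --- like the paper's own rather compressed argument, whose converse direction is left as ``not difficult'' --- it is sketched rather than fully spelled out, so a complete write-up would still have to prove that commutation, but the strategy is sound.
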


\begin{proof}
\begin{enumerate}
\item We start by showing that every LB2 extension $E$ of a network $\CA_0=(S_0,R_0)$ is also a CF2 extension as defined in Definition \ref{DFeb11-1}. To achieve this goal we need to follow closely how the extension $E$ was defined in LB2 for $\CA_0$.

The LB2 semantics was defined in Definition \ref{DFeb07-14} by using the protocols of Remark \ref{RFeb07-13}. The loop-busting sets involved in these protocols were defined in item 2 of Definition \ref{DFeb07-7}.

Let us list the way the LB2 extension $E$ of $\CA_0$ is defined.
\begin{enumerate}
\item $E$ is defined according to item 2 of Remark \ref{RFeb07-13}. The extension is obtained in the form $E = E(B_1\comma B_{n+2})$, where each $B_{i+1}$ is a loop-buster on the top loops of the network $\CA_i =(S_i,R_i)$.
\item The loop buster was chosen according to the protocol (CF2 protocol) of item 2 of Definition \ref{DFeb07-7}.
\item The elements of $S_i-S_{i+1}$ are of rank $i$, where $i$ is the step in which they got a numerical value in $\{0,1\}$ by the function $\Bf_i$.

The function $\Bf_{n+1}$ gives numerical values in $\{0,1\}$ to all the elements of $S_0$ and we have 
\[
E =\{x\in S_0|\Bf_{n+1} (x) =1\}.
\]
\end{enumerate}
We are now going to use the rank $i$ to show that $E$ is a CF2 extension according to Definition \ref{DFeb11-1}.  We need a bit more preparation.
\begin{enumerate}
\setcounter{enumii}{3}
\item Let $\CA^*_0=(S^*_0,R^*_0)$ be the ordering without loop derived from $(S_0, R_0)$ as in Definition \ref{DFeb07-7}.  The element classes of $S^*_0$ are all the maximal strongly connected subsets of $S_0$
\end{enumerate}
We continue the proof by induction on 
$n+2$, being the number of steps required to define $E$.
\begin{enumerate}
\setcounter{enumii}{4}
\item Case $n+2=2 (n=0)$.

In this case we have that $(S_0,R_0)$ itself is a strongly connected set. Then $E$ is obtained from $B_1$ which satisfies the CF2 condition as in Step 1 of Remark \ref{RFeb07-13}. Using the notation of Step 1, we have $E=\{x|\Bf_1(x) =1\}$.  In this case $E$ is also a CF2 extension.
\item Case $n>0$\\
Take any strongly connected maximal subset $C$ of $(S_0,R_0)$. Let $k+1$ be the step at which all elements of $C$ get a numerical value. There are two possibilities:
\begin{enumerate}
\item At step $k$ some maximal subset $C' \subseteq C$ does not yet have numerical values.  $C'$ is a top loop in $\CA_k$.  In this case $B_k$ is a loop buster for $C'$ and makes it get a numerical value in $\CA_{k+1}$.

\item $C'$ is not a top loop in $\CA_k$, in which case $B_k$ busts some other loops and in the process of obtaining $\CA_{k+1}$, $C'$ disappears as all these elements get a numerical value. In fact, in this case it is Step $k$ which gives all elements of $C$ a numerical value.
\end{enumerate}
Let us now look at the set $T^C_1$, as defined in item 2 of Definition \ref{DFeb11-1}.
\[
T^C_1= C-\{\exists y \in (E-C)((y,x)\in R)\}
\]
The set $T^C_1$ is comprised from two parts, $T^C_{1,k}$ and $T^C_{1,k+1}$. Part $T^C_{1,k}$ are all points $z \in T^C_1$ that get numerical value at step $k$ by $\Bf_k$ and the set $T^C_{1,k+1}$ is the set of all points that get numerical values at step $k+1$ by the function $\Bf_{k+1}$.

In case (i) above, $T^C_{1,k+1}$ is still the loop $C'$, but still $T^C_{1,k}$ may be $\neq \varnothing$.

In case (ii), $T^C_{1,k+1} =\varnothing$. We ask is $E\cap T^C_1$ a CF2 extension of $T^C_1$ according to Definition \ref{DFeb11-1}?  The answer is yes.  The part $T^C_{1,k}$ is calculated traditionally and if there is a loop $C' = T^C_{1,k+1}$, it will be busted by $B_k$ which was chosen in LB2 to yield a maximal conflict free set.

We thus see through considerations (a)--(e) that LB2 $\subseteq$ CF2.
\end{enumerate}
The reader should see Remark \ref{RFeb13-1}, to appreciate the difference between the way LB2 and CF2 calculate their extensions.
\item We now prove the other direction, namely that every CF2 extension is also an LB2 extension.

Let $E_0$ be a CF2 extension of the network $\CA_0 =(S_0, R_0)$. We would like to define a sequence of LB2 loop-busters $B_1, B_2\comma B_{n+2}$ such that $E_0 =E(B_1\comma B_{n+2})$.  We choose $B_i$ by looking at $E_0$.

{\bf Step 1.}\\
Look at the top loops of $(S_0, R_0)$. Use $E_0$ to choose the loop-busters.

Let us look at top strongly connected sets of $(S_0, R_0)$. These are either single unattacked points $x$ for which the LB2 equation is $x=1$ (in agreement with CF2) or a loop $C$, for which CF2 gives a choice of maximal conflict-free subset $E_0 \cap C$. We can now choose our LB loop-buster $B_1$ to be
\[
B_1= \bigcup_{\mbox{top loops $C$ of }\CA_0} (C-E_0.
\]
We now apply step 1 of the LB2 procedure and get $\CA_1=(S_1,R_1)$. Again consider the top loops of $\CA_1$.  Let 
\[
B_1=\bigcup_{\mbox{top loops $C$ of }\CA_1} (C-E_0)
\]
We carry on in this manner and get 
\[
E(B_1,B_2,\ldots).
\]
To show that $E_0=E$ is not difficult. This is done along the lines of the proof of (1) above.
\end{enumerate}
\end{proof}

\begin{remark}\label{RFeb13-1}
The way we compute the extensions of LB2 are not synchronised with the way CF2 itself works. This can be seen from the network of Figure \ref{FFeb07-5}.  In this figure the top loop is $\{a,b,c\}$. By choosing the maximal conflict free set $\{c\}$, we are led to the loop-buster $B =\{b\}$.

In step 1 we propagate the attacks (solve the equations) to get numerical values for as many variables as we can.

We get in LB2
\[
a=0, c=1, b=0, d=1, e=1, f=0, g=1, h=0 \mbox{ and } i=1.
\]
We get the extension in one step
\[
E =\{c,d,e,g,i\}=E(\{b\}).
\]
In comparison, when we follow the CF2 procedures, we look at two loops, the strongly connected sets, $\{a,b,c\}$ and $\{e,f,g,h,i\}$.  

Step 1 of the LB2 procedure corresponds to what the CF2 definition does, namely treat the loop $\{a,b,c\}$. This we do by choosing $c=1$ and calculating $a=0, b=0$ and $d=1$.

We now look at the loop $\{e,f,g,h,i\}$ and take from it the elements attacked from outside it.  In this case we take the element $f$ attacked by $d$.  Thus we are left with $S_1=\{e,g,h,i\}$ and $R_1$ being $\{(g,h), (h,i)\}$.  The CF2 extension for $(S_1,R_1)$ is $\{e,g,i\}$. So the extension we get finally is $E =\{c,d,e,g,i\}$.

The intuition of LB2 is to solve equations and get numerical values as much as possible. In the process we bust loops, by making loop variables equal 0.  Because we are dealing with equations, we can make loop variables equal 0 one by one.

CF2 in comparison, is different.

\begin{enumerate}
\item It concentrates more on loops
\item It treats loops by choosing maximal conflict free sets and makes them 1.
\item Since it is not dealing with equations, it must make 1 batches of variables (maximal conflict free sets) and cannot do them one by one.  So LB2 has the same problem. It must make 0 batches of variables all in one go.
\end{enumerate}
See \cite{459-2} for an algorithm for CF2. I think they do it inductively as we did in this example.

\end{remark}

%%%%%%%%%%% end insert 1

\begin{remark}[Computational complexity]\label{RFeb07-15}
The computational complexity of the LB semantics is daunting. This is because of the 
requirement that the loop-busting sets in LB1 and LB3 be computationally minimal.  This means that we have to solve many equations before we can choose our sets. So I don't regard LB1 and LB3 as practical. In comparison, LB4 is easy, we just choose a loop element based on the geometry of the network and proceed recursively.  This is simple and easy.

Two factors are in our favour. One mathematical and one social.  

The mathematical one is that since we are dealing with equation solving, there are tools available for our use such as {\sc Mathematica} or {\sc Matlab} or {\sc Maple} or NSolve which can help.

We note that there is actually no need to go to equations at all, in view of the soundness results in the representation Theorem \ref{FT2}.

We can work completely with argumentation networks.

On the social side, our argumentation community is blessed with many talented young researchers such as Sarah Gaggl and Stefan Woltran (see \cite{459-2}) and Wolfgang Dvorak, who write good algorithms, and who (should they take an interest in the LB semantics) would certainly find a way around complexity difficulties.

Perhaps the comparison in Remark \ref{RFeb13-1} shows the way of how the algorithms of \cite{459-2}  can be adapted to the LB semantics. 
\end{remark}

\begin{remark}[Comparison of LB1 semantics with the CF2 semantics]\label{RFeb08-4}
Let us make a quick comparison.
\begin{enumerate}
\item LB1 give the correct exact extension for even loops. CF2 gives more extensions.
\item LB1 does bust odd loops but gives less extensions than CF2.
\item LB1 remains conceptually within the attack concept framework.  It has the backing of hundreds of years of experience in approximate/guess/simplification of equations. CF2 uses the slightly out of sync concept of maximal conflict free sets, and although LB2 gives it an equational flavour, it is forced and not natural (see next item 4).
\item Most importantly, and this has been pointed out to me by Martin Caminada, CF2 is not robust against conceptual extensions, such as joint attacks.  This is shown in the next remark, \ref{RFeb08-5}.

We shall also see that the LB semantics is robust.
\end{enumerate}
\end{remark}

\begin{remark}[CF2 and joint attacks]\label{RFeb08-5}
\begin{enumerate}
\item In my paper \cite{459-9} of Fibring argumentation frames, I introduced the notion of joint attack of say two arguments $a$ and $b$ on a third argument $c$.  This means that $c$  is in only when both $a$ and $b$ are out.  I used the notation of Figure \ref{FFeb08-6}.

\begin{figure}
\centering
\setlength{\unitlength}{0.00083333in}
\begingroup\makeatletter\ifx\SetFigFont\undefined%
\gdef\SetFigFont#1#2#3#4#5{%
  \reset@font\fontsize{#1}{#2pt}%
  \fontfamily{#3}\fontseries{#4}\fontshape{#5}%
  \selectfont}%
\fi\endgroup%
{\renewcommand{\dashlinestretch}{30}
\begin{picture}(1207,1528)(0,-10)
\path(587,800)(12,1345)
\path(602,795)(602,190)
\blacken\path(572.000,310.000)(602.000,190.000)(632.000,310.000)(572.000,310.000)
\path(607,790)(1182,1345)
\put(597,798){\ellipse{124}{124}}
\put(17,1380){\makebox(0,0)[b]{\smash{{\SetFigFont{10}{12.0}{\rmdefault}{\mddefault}{\updefault}$a$}}}}
\put(1192,1390){\makebox(0,0)[b]{\smash{{\SetFigFont{10}{12.0}{\rmdefault}{\mddefault}{\updefault}$b$}}}}
\put(607,55){\makebox(0,0)[b]{\smash{{\SetFigFont{10}{12.0}{\rmdefault}{\mddefault}{\updefault}$c$}}}}
\end{picture}
}
\caption{}\label{FFeb08-6}
\end{figure}
The equation for $c$ is 
\[
c =1-ab.
\]
I also showed in the paper how to interpret joint attacks within ordinary argumentation networks, using for each node $e$ the new auxiliary points $x(e)$ and $y(e)$. The joint attack of Figure \ref{FFeb08-6} can be represented faithfully by Figure \ref{FFeb08-7}.

\begin{figure}
\centering
\setlength{\unitlength}{0.00083333in}
\begingroup\makeatletter\ifx\SetFigFont\undefined%
\gdef\SetFigFont#1#2#3#4#5{%
  \reset@font\fontsize{#1}{#2pt}%
  \fontfamily{#3}\fontseries{#4}\fontshape{#5}%
  \selectfont}%
\fi\endgroup%
{\renewcommand{\dashlinestretch}{30}
\begin{picture}(1210,2763)(0,-10)
\put(1195,2615){\makebox(0,0)[b]{\smash{{\SetFigFont{10}{12.0}{\rmdefault}{\mddefault}{\updefault}$b$}}}}
\path(409,1719)(674,1169)
\blacken\path(594.886,1264.084)(674.000,1169.000)(648.939,1290.128)(594.886,1264.084)
\path(930,1725)(710,1170)
\blacken\path(726.331,1292.610)(710.000,1170.000)(782.109,1270.500)(726.331,1292.610)
\path(1179,2562)(959,2007)
\blacken\path(975.331,2129.610)(959.000,2007.000)(1031.109,2107.500)(975.331,2129.610)
\path(699,876)(701,260)
\blacken\path(670.611,379.902)(701.000,260.000)(730.610,380.097)(670.611,379.902)
\put(15,2625){\makebox(0,0)[b]{\smash{{\SetFigFont{10}{12.0}{\rmdefault}{\mddefault}{\updefault}$a$}}}}
\put(305,1825){\makebox(0,0)[b]{\smash{{\SetFigFont{10}{12.0}{\rmdefault}{\mddefault}{\updefault}$x(a)$}}}}
\put(740,945){\makebox(0,0)[b]{\smash{{\SetFigFont{10}{12.0}{\rmdefault}{\mddefault}{\updefault}$y(c)$}}}}
\put(940,1830){\makebox(0,0)[b]{\smash{{\SetFigFont{10}{12.0}{\rmdefault}{\mddefault}{\updefault}$x(b)$}}}}
\put(700,55){\makebox(0,0)[b]{\smash{{\SetFigFont{10}{12.0}{\rmdefault}{\mddefault}{\updefault}$c$}}}}
\path(50,2565)(315,2015)
\blacken\path(235.886,2110.084)(315.000,2015.000)(289.939,2136.128)(235.886,2110.084)
\end{picture}
}
\caption{}\label{FFeb08-7}
\end{figure}

It is important to note that if we calculate the equations of Figure \ref{FFeb08-7} we get for $Eq_{\rm inverse}$
\[\begin{array}{l}
x(a) =1-a\\
x(b)=1-b\\
y(c)=ab\\
c=1-ab
\end{array}\]

If we use $Eq_{\max}$ we get 
\[\begin{array}{l}
x(a)=1-a\\
x(b)=1-b\\
y(c)=1-\max(1-a,1-b)\\
c=\max(1-a,1-b)=1-\min(a,b)
\end{array}
\]
It is clear that in either case the equation for $c$ does not depend on the auxiliary points. Therefore any equational computation to get extensions will not be affected by the auxiliary points.
\item  Let us now take a loop involving joint attacks. Suppose we have 3 items, $a,b$ and $c$ and enough money to buy only two. Thus buying any two items attacks jointly the buying the third item. We get the loop of Figure \ref{FFeb08-8}.

\begin{figure}
\centering
\setlength{\unitlength}{0.00087489in}
\begingroup\makeatletter\ifx\SetFigFont\undefined%
\gdef\SetFigFont#1#2#3#4#5{%
  \reset@font\fontsize{#1}{#2pt}%
  \fontfamily{#3}\fontseries{#4}\fontshape{#5}%
  \selectfont}%
\fi\endgroup%
{\renewcommand{\dashlinestretch}{30}
\begin{picture}(1872,1813)(0,-10)
\put(552,1180){\ellipse{128}{128}}
\put(597,730){\ellipse{128}{128}}
\put(237,910){\ellipse{128}{128}}
\path(192,1630)(552,1180)(1722,1180)
\path(12,1585)(237,910)(1722,1090)
\blacken\path(1606.482,1045.778)(1722.000,1090.000)(1599.262,1105.342)(1606.482,1045.778)
\path(1722,1000)(597,730)(372,280)
\path(552,1180)(282,280)
\blacken\path(287.747,403.560)(282.000,280.000)(345.217,386.319)(287.747,403.560)
\path(597,730)(102,1630)
\blacken\path(186.117,1539.312)(102.000,1630.000)(133.544,1510.397)(186.117,1539.312)
\path(237,910)(192,280)
\put(57,1675){\makebox(0,0)[b]{\smash{{\SetFigFont{10}{12.0}{\rmdefault}{\mddefault}{\updefault}$a$}}}}
\put(1857,1045){\makebox(0,0)[b]{\smash{{\SetFigFont{10}{12.0}{\rmdefault}{\mddefault}{\updefault}$b$}}}}
\put(282,55){\makebox(0,0)[b]{\smash{{\SetFigFont{10}{12.0}{\rmdefault}{\mddefault}{\updefault}$c$}}}}
\end{picture}
}
\caption{}\label{FFeb08-8}
\end{figure}

The representation of this figure using the auxiliary point into an ordinary argumentation network is presented in Figure \ref{FFeb08-9}.

\begin{figure}
\centering
\setlength{\unitlength}{0.00087489in}
\begingroup\makeatletter\ifx\SetFigFont\undefined%
\gdef\SetFigFont#1#2#3#4#5{%
  \reset@font\fontsize{#1}{#2pt}%
  \fontfamily{#3}\fontseries{#4}\fontshape{#5}%
  \selectfont}%
\fi\endgroup%
{\renewcommand{\dashlinestretch}{30}
\begin{picture}(3636,1800)(0,-10)
\path(2175,1605)(60,435)
\blacken\path(150.482,519.338)(60.000,435.000)(179.526,466.836)(150.482,519.338)
\path(1455,390)(690,1560)
\blacken\path(780.779,1475.981)(690.000,1560.000)(730.561,1443.146)(780.779,1475.981)
\path(2175,1605)(2175,1065)
\blacken\path(2145.000,1185.000)(2175.000,1065.000)(2205.000,1185.000)(2145.000,1185.000)
\path(1455,390)(2175,885)
\blacken\path(2093.111,792.295)(2175.000,885.000)(2059.119,841.738)(2093.111,792.295)
\path(915,1695)(1320,1695)
\blacken\path(1200.000,1665.000)(1320.000,1695.000)(1200.000,1725.000)(1200.000,1665.000)
\path(1590,1695)(1995,1695)
\blacken\path(1875.000,1665.000)(1995.000,1695.000)(1875.000,1725.000)(1875.000,1665.000)
\path(825,300)(1230,300)
\blacken\path(1110.000,270.000)(1230.000,300.000)(1110.000,330.000)(1110.000,270.000)
\path(240,300)(645,300)
\blacken\path(525.000,270.000)(645.000,300.000)(525.000,330.000)(525.000,270.000)
\path(2400,975)(2805,975)
\blacken\path(2685.000,945.000)(2805.000,975.000)(2685.000,1005.000)(2685.000,945.000)
\path(3570,1065)(735,1560)
\blacken\path(858.372,1568.913)(735.000,1560.000)(848.052,1509.807)(858.372,1568.913)
\path(3030,975)(3435,975)
\blacken\path(3315.000,945.000)(3435.000,975.000)(3315.000,1005.000)(3315.000,945.000)
\path(3615,885)(3615,884)(3616,883)
	(3616,881)(3617,878)(3618,873)
	(3619,867)(3621,859)(3622,850)
	(3623,839)(3624,827)(3624,814)
	(3624,799)(3622,784)(3619,768)
	(3615,752)(3609,734)(3601,716)
	(3591,698)(3578,678)(3562,659)
	(3542,638)(3519,616)(3492,594)
	(3459,570)(3421,545)(3378,519)
	(3328,492)(3272,463)(3210,435)
	(3158,413)(3104,391)(3050,370)
	(2998,350)(2947,331)(2899,313)
	(2854,297)(2813,282)(2775,269)
	(2741,256)(2710,245)(2682,235)
	(2657,226)(2634,218)(2613,210)
	(2594,203)(2576,196)(2559,190)
	(2542,184)(2525,178)(2508,172)
	(2489,166)(2468,160)(2445,153)
	(2419,146)(2390,139)(2357,132)
	(2319,124)(2277,115)(2231,106)
	(2178,96)(2121,87)(2058,76)
	(1990,66)(1918,56)(1841,47)
	(1761,38)(1680,30)(1598,24)
	(1518,19)(1439,15)(1363,13)
	(1290,12)(1219,12)(1151,14)
	(1086,16)(1024,19)(964,23)
	(906,27)(850,32)(796,38)
	(743,44)(692,51)(643,58)
	(595,65)(548,73)(503,81)
	(459,89)(418,96)(378,104)
	(340,112)(304,119)(271,127)
	(241,133)(214,139)(190,145)
	(169,150)(152,154)(137,157)
	(126,160)(118,162)(105,165)
\blacken\path(228.673,167.249)(105.000,165.000)(215.181,108.785)(228.673,167.249)
\put(645,1650){\makebox(0,0)[b]{\smash{{\SetFigFont{10}{12.0}{\rmdefault}{\mddefault}{\updefault}$y(a)$}}}}
\put(2175,1650){\makebox(0,0)[b]{\smash{{\SetFigFont{10}{12.0}{\rmdefault}{\mddefault}{\updefault}$x(a)$}}}}
\put(1455,255){\makebox(0,0)[b]{\smash{{\SetFigFont{10}{12.0}{\rmdefault}{\mddefault}{\updefault}$x(b)$}}}}
\put(15,255){\makebox(0,0)[b]{\smash{{\SetFigFont{10}{12.0}{\rmdefault}{\mddefault}{\updefault}$y(b)$}}}}
\put(2175,930){\makebox(0,0)[b]{\smash{{\SetFigFont{10}{12.0}{\rmdefault}{\mddefault}{\updefault}$y(c)$}}}}
\put(1455,1650){\makebox(0,0)[b]{\smash{{\SetFigFont{10}{12.0}{\rmdefault}{\mddefault}{\updefault}$a$}}}}
\put(735,255){\makebox(0,0)[b]{\smash{{\SetFigFont{10}{12.0}{\rmdefault}{\mddefault}{\updefault}$b$}}}}
\put(2895,930){\makebox(0,0)[b]{\smash{{\SetFigFont{10}{12.0}{\rmdefault}{\mddefault}{\updefault}$c$}}}}
\put(3615,930){\makebox(0,0)[b]{\smash{{\SetFigFont{10}{12.0}{\rmdefault}{\mddefault}{\updefault}$x(c)$}}}}
\end{picture}
}
\caption{}\label{FFeb08-9}
\end{figure}
In this figure, the set $\{a,b,c\}$ is conflict free in the expanded network with the auxiliary points but it is not so in the original network, contrary to intuition. So CF2 messes up the concept of joint attack.

In comparison, the LB semantics is not affected by the auxiliary points as we have already seen from the equations. Not being affected means that if we start with a network $(S,R)$ and move for whatever reason  to an expanded network $(S', R')$, containing additional  auxiliary points , then any extension $E'$ obtained traditionally for $(S',R')$ will endow a correct and acceptable extension  $E= E' \cap S$ of $(S,R)$, and furthermore all such correct extensions $E$ are so obtained.

 Let us find a loop-buster for Figure \ref{FFeb08-9}.  Note that this system is similar to Figure \ref{FF7}, being a 9-point loop.

A computational loop-buster would be, for example, $\{y(a), y(b)\}$.

This gives $a=b=1$ and $c=0$. $\{y(a), y(b), y(c)\}$ is not minimal so we cannot get the extension $\{a,b,c\}$.  

It is clear that our LB machinery works correctly here.
\end{enumerate}
\end{remark}

\section {Discussion and conclusion}
Let us summarise the progression of logical and conceptual steps involved in this paper:
 \begin{enumerate}
\item  We saw that the equational approach allows us to  associate in a one to one manner a  system of  equations with each argumentation network.
\item Finding a solution to the equations corresponds to finding an extension for the argumentation network.
\item  When a system of equation is difficult to handle, there is a well known and much used methodology in the equational area of perturbing the equations to make it them more manageable, in a way compatible with one's application area.
    \item  In the argumentation area odd and even loops are a bit of a problem
    \item  The counterpart of loops in the equational area  are cycles of variable dependencies, a well understood phenomena, traditionally handled by iterative solutions.
    \item We therefore suggest, inspired by equational thinking, the concept of loop-busting sets of variables which we perturb to be 0, and use them to modify the equations.
    \item This gives rise to the LB semantics for argumentation, which contains the CF2 semantics as  a special option, which we called LB2 (= CF2).

Note that LB2 is based on making arguments 0 , while CF2 is based on making arguments 1. So the agreement is non-trivial.
 \item We considered other options such as LB1, LB3 and LB4. We showed that they agree with traditional semantics on even loops and still repair odd loops.
    \item We compared the other LB semantics with CF2 and found them robust as far as conceptual changes such as joint attacks.
   \item The LB1, LB3 and LB4 semantics can be done by loop busting using repeatedly and recursively one element at a time. The LB2 semantics (which simulates CF2 and uses maximal conflict free sets) requires the use of all points of the loop busting set to be made 0 simulataneously. 

\end{enumerate}

We now say a few comments, comparing  this paper with \cite{459-3}. The Shkop semantics introduced in \cite{459-3}, corresponds to LB4. In \cite{459-3} argumentation loops are considered as created over time, where the arguments come as a result of agents creating situations by their temporal actions. This means that when an odd loop is created, we can identify one of the members of the loop as the temporally last argument which came into existence by some action and created the loop. The Shkop principle says that in this case, this argument is rejected (in the real world the action giving rise to it is annulled). This is mathematically equivalent to using this last argument as a loop buster and making it equal 0.

So the Shkop principle and the Shkop semantics of \cite{459-3} corresponds to LB4 semantics where the loop-busters are chosen using temporal information.

\section*{Acknowledgements}
I am grateful to Martin Caminada, Sarah Gaggl, and Stefan Woltran for helpful discussion.

Research done under ISF project: Integrating Logic and Network Reasoning.

\end{document}